\newtheorem{theorem}{Theorem}
\newtheorem{lemma}{Lemma}
\definecolor{mygray}{gray}{.9}
\theoremstyle{definition}
\newtheorem{assump}{Assumption}
\newtheorem{defn}{Definition}
\newcommand\bl[1]{{\color{blue}#1}}
\begin{document}
\title{Balancing Client Participation in Federated Learning Using AoI}

 \author{
   \IEEEauthorblockN{Alireza Javani, Student Member, IEEE, Zhiying Wang, Member, IEEE}
  
 }

\maketitle
\begin{abstract}
Federated Learning (FL) offers a decentralized framework that preserves data privacy while enabling collaborative model training across distributed clients. However, FL faces significant challenges due to limited communication resources, statistical heterogeneity, and the need for balanced client participation. This paper proposes an Age of Information (AoI)-based client selection policy that addresses these challenges by minimizing load imbalance through controlled selection intervals. Our method employs a decentralized Markov scheduling policy, allowing clients to independently manage participation based on age-dependent selection probabilities, which balances client updates across training rounds with minimal central oversight. We provide a convergence proof for our method, demonstrating that it ensures stable and efficient model convergence. Specifically, we derive optimal parameters for the Markov selection model to achieve balanced and consistent client participation, highlighting the benefits of AoI in enhancing convergence stability. Through extensive simulations, we demonstrate that our AoI-based method, particularly the optimal Markov variant, improves convergence over the FedAvg selection approach across both IID and non-IID data settings by $7.5\%$ and up to $20\%$. Our findings underscore the effectiveness of AoI-based scheduling for scalable, fair, and efficient FL systems across diverse learning environments.
\let\thefootnote\relax\footnotetext{This paper is presented in part at the 2024 IEEE Global Communications Conference (Globecom).
The authors are with the Center for Pervasive Communications and Computing, University of California, Irvine (e-mail:
ajavani@uci.edu, zhiying@uci.edu).}
\end{abstract}
\section{Introduction}
Federated learning (FL), introduced by McMahan et al. \cite{mcmahan2017communication}, emerged as a solution to the limitations of traditional machine learning models that require centralized data collection and processing. FL enables client devices to collaboratively train a global model while keeping all the training data localized, thus addressing privacy concerns. Traditional machine-learning approaches require centralized data training in data centers, which often becomes impractical for edge devices due to privacy constraints in wireless networks and limited wireless communication resources. Federated learning overcomes these challenges by enabling devices to train machine learning models without data sharing and transmission, fulfilling the needs of data privacy and security.

Compared to traditional distributed machine learning, Federated Learning introduces several challenges \cite{li2020federated}, including system heterogeneity from diverse device capabilities causing aggregation delays due to stragglers, statistical heterogeneity arising from non-IID and imbalanced client data affecting model convergence, and privacy concerns as exchanged model updates may inadvertently expose sensitive information.

In the context of wireless federated learning (FL), where clients operate in dynamically changing wireless environments, maintaining stable network connections and managing the non-IID characteristics of data are critical challenges. The iterative training process of FL involves clients updating local models with their data before sending these updates to a central server for aggregation.

To reduce communication load, a client selection policy can be applied such that only a subset of the clients transmits to the central server in a round. For example, random selection \cite{li2019convergence} selects each client uniformly at random in each round. However, client selection may lead to an unbalanced communication load and uneven contribution to the global model among clients. To address this, we propose the load metric $X$, defined as the number of rounds between subsequent selections of a client. Assuming the distribution of the load metric for all clients is identical, our goal is to minimize $\operatorname{Var}[X]$, the variance of $X$.

The load metric $X$ is related to the concept of Age of Information (AoI) \cite{kaul2012real}, which is the time elapsed since the last transmission of a client. In fact, $X$ is called the peak age \cite{costa2014age}. While most previous approaches in studying AoI in networking problems involve minimizing the {average} age or the average peak age \cite{yates2021age, javani2019age, javani2021age}, this paper focuses on reducing the variance of $X$, thereby ensuring balanced client selection and load distribution.

The benefits of minimizing the variance of $X$ include:

\begin{itemize}
    \item \textbf{Predictable Update Intervals and Load Balancing:} By reducing the variance of $X$, we stabilize the peak age and ensure consistent intervals between client selections. This predictability implies a fair and balanced selection process, improving operational efficiency.
    
    \item \textbf{Improved Convergence and Accuracy:} Lower variability in \(X\) is crucial for maintaining a diverse and representative global model, which helps prevent biases and improves overall model accuracy.
\end{itemize}

We introduce a decentralized policy based on a Markov chain, where the state is determined by the age of the client, and transition probabilities represent an age-dependent selection policy. This approach is well-suited for decentralized client selection, reducing the management overhead of the central server. Moreover, the Markov model can be potentially modified to allow real-time adjustment of client selection probabilities to accommodate the dynamic nature of client data quality, computation resources, and network conditions. The simulation results demonstrate that applying the Markov model enhances the convergence rate of the learning processes, achieving the target accuracies in fewer communication rounds. These findings confirm the efficacy of our approach in addressing issues related to client overloading and under-utilization while maintaining efficiency throughout the learning process.

Furthermore, we develop a comprehensive theoretical framework for federated learning (FL) that accounts for the impact of client selection strategies on convergence. Our analysis reveals that the convergence rate of FL is tied to the variance of aggregation weights across clients. Specifically, by minimizing the variance in these weights, it is possible to achieve faster convergence, underscoring the importance of carefully designed aggregation mechanisms.

We also utilize the concept of selection skew, which influences convergence speed. By biasing client selection towards those with particular characteristics, the convergence rate can be enhanced. This insight is crucial for designing more efficient FL algorithms, as it allows for a more tailored approach to client selection that leverages the dynamic interplay between the global and local models during training. Our findings offer valuable theoretical and practical implications for optimizing FL systems, paving the way for more effective and scalable deployment in real-world applications.

To the best of our knowledge, this is the first work to apply an Age of Information (AoI)-based approach to client selection in federated learning, specifically targeting the minimization of load imbalance through controlled selection intervals. Our contributions are threefold. First, we introduce a decentralized Markov scheduling policy that balances client participation without heavy central coordination. Second, we develop a theoretical framework, including a convergence proof and optimal parameters for the Markov model, demonstrating that minimizing the variance of the load metric \(X\) leads to more stable and efficient model convergence. Third, we perform extensive simulations showing that our AoI-based method, particularly the optimal Markov variant, outperforms traditional selection methods like FedAvg in both IID and non-IID data scenarios. These contributions advance federated learning by providing a scalable, fair, and balanced client selection strategy across diverse environments.
\section{Related Work}
Federated Learning (FL) has drawn a lot of attention in recent years, with researchers tackling core challenges such as system heterogeneity, non-IID data, and communication constraints. A wide range of methods have been proposed to make FL more efficient and practical in real-world settings.

A foundational approach in this domain is the Federated Averaging (FedAvg) algorithm \cite{mcmahan2017communication}, which employs a random selection of clients for participation in training. Despite FedAvg's critical role in the evolution of FL, its performance can be hindered in scenarios characterized by non-IID data or constrained communication bandwidth. To overcome these challenges, importance sampling has emerged as a notable strategy, prioritizing gradient computations on the most informative training data to enhance learning efficiency \cite{balakrishnan2021resource}.

The authors in \cite{10901314} propose a deep reinforcement learning approach to jointly optimize client selection and federated learning parameters, enhancing communication efficiency and accommodating heterogeneous resource constraints in federated edge learning systems. In \cite{10901264}, a lattice-based channel coding scheme is introduced for over-the-air federated edge learning, enabling reliable model aggregation with error correction that scales independently of the number of participating clients. Similarly, \cite{10901841} presents a joint client selection and resource scheduling framework using a dynamic reputation model and an asynchronous parallel DDPG algorithm to improve efficiency and robustness in federated learning over vehicular edge networks.

To enhance client selection, a loss-based sampling policy \cite{goetz2019active} selects clients with higher loss values to accelerate convergence, although its performance is sensitive to hyperparameter settings. In contrast, a sample size-based sampling policy \cite{fraboni2021clustered} chooses clients with a large number of local samples, offering rapid convergence but facing difficulties in non-IID settings. \cite{sinha2023tradeoff} proposes FedVQCS, a federated learning framework combining vector quantization and compressed sensing to reduce communication overhead while maintaining accuracy and convergence speed.

Research into aggregation methods has revealed limitations of parameter averaging, with observations indicating an increase in mutual information between parameters across training phases without guaranteeing convergence \cite{averaging}. This suggests that more complex aggregation strategies might be required to achieve optimal outcomes in FL.

New approaches such as FedNS (Federated Node Selection) \cite{fednes} and FedFusion \cite{8803001} introduce novel techniques for model aggregation and feature fusion. FedNS differentiates itself by assessing and weighting models based on performance, especially beneficial in non-IID environments, while FedFusion combines local and global model features to enhance accuracy and efficiency, highlighting the benefits of feature-level integration.

Extending beyond the FedAvg framework, FedProx \cite{fedprox} incorporates a proximal term to mitigate the effects of heterogeneous data and system capabilities on convergence, emphasizing the importance of algorithmic flexibility. Similarly, FedASMU \cite{liu2024fedasmu} employs an asynchronous, staleness-aware aggregation method, effectively managing statistical heterogeneity and system disparities, demonstrating notable improvements over traditional methods. Moreover, FedDC \cite{gao2022feddc} presents an approach to bridge the gap between local and global models through a local drift variable, adeptly handling statistical heterogeneity and enhancing performance in complex FL scenarios.

In federated learning, the Age of Information (AoI) metric has been increasingly adopted to address issues of data staleness and network resource constraints, enhancing accuracy and convergence by prioritizing fresher updates. This AoI-based approach, by giving precedence to the timeliness, tackles critical challenges in FL environments, improving overall learning efficiency \cite{yang2020age}. \cite{wang2024convergence} further leverages AoI in a Stackelberg game framework to jointly optimize global loss and latency, demonstrating that AoI can reduce latency and stabilize convergence in FL. \cite{wu2023joint} expands on this by applying an Age of Update (AoU) strategy in a NOMA-enabled FL system, combining client selection with resource allocation to minimize round time, further enhanced by a server-side ANN that predicts updates for unselected clients. Collectively, these works underscore the value of AoI in achieving efficient, timely client selection in dynamic FL settings.

Both \cite{cho2020client} and \cite{fraboni2022general} provide foundational insights into federated learning convergence through client selection. \cite{cho2020client} introduces the POWER-OF-CHOICE strategy and shows that biased selection toward clients with higher local losses accelerates convergence by emphasizing impactful updates. Authors in \cite{fraboni2022general} develop a general theory for client sampling, linking convergence speed to aggregation weight variance and covariance, highlighting the importance of minimizing sampling variance for optimal convergence, particularly under non-IID data conditions.

Building on these insights, our work introduces an Age of Information (AoI)-based client selection mechanism using a Markov scheduling model to balance client participation. Our convergence analysis aligns with Fraboni et al.’s findings on variance minimization, and our simulations show that an optimal AoI-based approach outperforms both random and biased client selection strategies in heterogeneous settings. This approach reinforces the effectiveness of AoI-based scheduling for stable and efficient federated learning.

These studies collectively underscore a multifaceted approach to overcoming the challenges of FL, with each contribution enhancing the robustness, efficiency, and adaptability of the federated learning framework. Together, they pave the way for broader deployment of FL in practical applications, making it more viable across a wide range of real-world environments. 

\section{System Model}
\subsection{Problem Setting}
{\bf FedAvg algorithm \cite{mcmahan2017communication}.} Consider a group of \(n\) clients coordinated by a central server to collaboratively train a common neural model using federated learning (FL). Each participating client \(i\), \(i \in \{1,2,\dots,n\}\), has a dataset \(D_i = \{X_i, Y_i\}\), where \(X_i\) is the input vector and \(Y_i\) is the corresponding output vector. These datasets are used to parameterize local models directly through weights \(W_i\). The goal of FL is to optimize the collective  average of each client’s local loss function, formulated as:
\begin{align}
\min_{\{W_i\}_{i=1}^{n}} \sum_{i=1}^{n}\frac{|D_i|}{\sum_{j}|D_j|} L(X_i, Y_i, W_i),    
\end{align}
where \(n\) represents the number of clients participating in the training, $|\cdot|$ denote the cardinality of a set, and \(L\) denotes the loss function. 

FedAvg algorithm comprises two steps that are repeated until the model converges or a set number of rounds is completed:

(i) Local training: In Round $t$, clients use their local data \(D_i\) to update the local model parameters to \(W_i^{(t)}\). This involves calculating the gradient of the loss function \(L\), typically through batches of data to perform multiple gradient descent steps.

(ii) Global aggregation: all clients periodically send their updated parameters \(W_i^{(t)}\) to the central server. The central server aggregates these parameters using a specified algorithm, commonly by averaging: \(W_{global}^{(t)} = \frac{1}{m} \sum_{i=1}^{m} W_i^{(t)}\).  The aggregated global model parameters \(W_{global}^{(t)}\) are then sent back to all participating clients.
Each client sets its local model parameters to \(W_{global}^{(t)}\). 

{\bf Client selection and load balancing.}
Since communication usually occurs through a limited spectrum, only a subset $m$ out of $n$ total clients can update their parameters during each global aggregation round. This necessitates a strategy for \emph{client selection}. 

We aim to achieve load balancing by equalizing the number of iterations between consecutive client selections, assuming each client is selected with equal probability $\frac{m}{n}$. To that end, we define the \emph{load metric} \(X\) as the random variable for the number of rounds between subsequent selections of a client. Assume $X$ follows the same distribution for all clients. We propose the following optimization problem: 
\begin{align}
  &\min_{\mathcal{S}} \operatorname{Var}[X], \label{eq:min_var}\\
  &\text{s.t. } P(S_i^{(t)}=1)=\frac{m}{n}, i \in \{1,\dots,n\}, t \in \{1,\dots,T\}.\label{eq:constraint}
\end{align}
Here, $\mathcal{S}$ represents the set of permissible client selection policies, and $T$ is the total number of communication rounds. In addition, \(S_i^{(t)}\) takes value \(1\) if client \(i\) is selected during the $t$-th round, and takes value \(0\) otherwise.

Despite that all clients share the same selection probability and average workload over all iterations, as required in constraint \eqref{eq:constraint}, $\operatorname{Var}[X]$ focuses on the workload dynamics even when a small number of iterations is considered.

Minimizing \(\operatorname{Var}[X]\) creates predictable update intervals and promotes equitable load distribution among all clients. 
Moreover, the staleness of local updates, represented by a large $X$, may slow down the convergence of the training process; and frequent updates from any single client, or a small $X$, may disproportionately affect the model. 
Thus, \eqref{eq:min_var} can improve learning convergence and accuracy by ensuring consistent data freshness, which is verified in our simulation. 

{\bf Relation to Age of Information.}
A relevant metric is AoI, defined as the number of rounds elapsed since the last time a client was selected.

The age $A_i$ for Client \(i\) in Round $t$ evolves as follows:
\begin{align}
A_i^{\left ( t+1 \right )} =\left ( A_i^{\left ( t \right )} +1 \right ) \left ( 1-S_i^{\left ( t \right )}  \right ),  S_i^{\left ( t \right ) }\in \left \{ 0,1 \right \},
\end{align}
where \(A_i^{(0)}=0\). Therefore, in each round, a client's age increases by one if it is not selected and it resets to zero if it is selected.
The load metric $X$ is equal to the peak age, or the age before a client is selected.

\subsection{Client Selection Policy}
In our federated learning framework, clients are selected for parameter updates based on specific selection policies. The AoI for each client \(i\) is calculated, where \(i \in [1,2,...,n]\), and this information is used to guide the selection of clients for parameter updates. After the clients are selected, the global model parameters \(\widetilde{W}\) are sent to the chosen clients to update their local models using supervised learning algorithms on their private training data. Below, we describe three distinct client selection policies.

\begin{enumerate}
    \item \label{policy:random} \textbf{Random Selection with Data Size Weighting:} \\
    Similar to FedAvg, in this policy, \(m\) out of \(n\) clients are randomly selected in each round. Once selected, client \(i\) is assigned a weight \(\omega_i = \frac{d_i}{\sum_{j \in S} d_j}\), where \(S\) is the set of selected clients and \(d_i\) represents the data size of client \(i\). Clients that are not selected are assigned \(w_j = 0\). This ensures that clients with larger datasets have more influence during the aggregation process.

    \item \label{policy:probabilistic} \textbf{Probabilistic Selection Based on Dataset Size:} \\
    Clients are selected with replacement, and the probability of selecting client \(i\) is proportional to the size of its dataset \(d_i\). The probability \(q_i\) of selecting client \(i\) is given by:
    $$
    q_i = \frac{d_i}{\sum_{j=1}^{n} d_j}.
    $$
    If a client is selected \(l\) times out of \(m\) total selections, it is assigned a weight \(\omega_i = \frac{l}{m}\), and \(\omega_i = 0\) for non-selected clients. This policy introduces variability based on dataset size while maintaining fairness in the number of selections.

\item \label{policy:markov} \textbf{Markov Selection Based on Age of Clients:} \\
In this policy, clients are selected according to a Markov process based on their age, which represents the time since their last update. The age of a client can range from 0 to \(m'\), and the selection probability \(p_a\) for a client at age \(a\) is defined. The steady-state probability \(\pi_a\) represents the likelihood of a client being in age state \(a\). The weight assigned to selected clients is distributed uniformly. Specifically, for a client in the age state 0, the probability \(\pi_0 = \frac{m}{n}\) indicates the expected probability that a client will be selected in steady state. This model allows the selection probability to evolve as an increasing function of \(p_a\), favoring clients with older information over time.

\begin{enumerate}
    \item \textit{Optimal Markov Policy:} The selection probabilities \(p_a\) are optimally computed to achieve load balancing across clients. 

    \item \textit{Non-Optimal Markov Policy:} The selection probabilities \(p_a\) are defined to increase monotonically with client age, i.e., \(p_0 < p_1 < \dots < p_{m'}\). This encourages selection of clients with older updates more frequently, enhancing fairness, but without guaranteeing optimal load balancing.
\end{enumerate}

\end{enumerate}

\begin{figure}
    \centering
\scalebox{0.54}{
\begin{tikzpicture}[->, >=stealth', auto, semithick, node distance=2.5cm, font=\Large]
\tikzstyle{state}=[circle, draw, minimum size=1.43cm]

\node[state] (zero) {0};
\node[state] (one) [right of=zero] {1};
\node[state] (two) [right of=one] {$2$};
\node (dots) [right=0.6cm of two] {$\cdots$}; %
\node[state] (m'-1) [right=0.6cm of dots] {$m'-1$}; %
\node[state] (m') [right of=m'-1] {$m'$};

\path (zero) edge [loop above] node {$p_0$} (zero)
      (zero) edge [bend left=50] node [above] {$1-p_0$} (one)
      (one) edge [bend left=50] node [below] {$p_1$} (zero)
      (one) edge [bend left=50] node [above] {$1-p_1$} (two)
      (two) edge [bend left=50] node [below] {$p_2$} (zero)
      (m'-1) edge [bend left=50] node [above] {$1-p_{m'-1}$} (m')
      (m'-1) edge [bend left=50] node [below] {$p_{m'-1}$} (zero)
      (m') edge [loop above] node {$1-p_{m'}$} (m')
      (m') edge [bend left=50] node [below] {$p_{m'}$} (zero);

\end{tikzpicture}
} 
\caption{Markov chain with $m'+1$ states.}
\label{markov}
\end{figure}
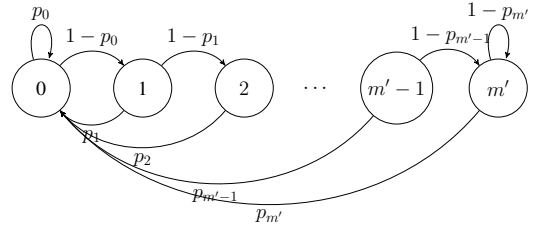

\subsection{Mathematical Analysis of Age-Based Client Selection}

In this section, we focus on a more detailed mathematical treatment of the age-based client selection policy introduced in Section \ref{policy:markov}. Specifically, we analyze the number of rounds it takes for a specific client to be selected in a Markov-based client selection system. We model this selection process using a generalized form of the geometric distribution and a Markov chain framework. The following analysis derives the expected number of rounds and the variance of the selection time for a single client.

For randomly selecting \( m \) out of \( n \) clients in each round, let the random variable \( X \) represent the number of rounds it takes for one specific client to be selected. This situation can be modeled using a generalized geometric distribution, as the process involves repeated trials until success (i.e., the specific client being selected).

The geometric distribution is defined as:
\[
P(X=x)=r(1-r)^{x-1},
\]
where \( r \) is the probability of success on any given trial.

The random variable \( X \) follows a geometric distribution with the following characteristics:
\[
\text{Mean} = \frac{1}{r} = \frac{n}{m} \quad \text{and} \quad \text{Var} = \frac{1-r}{r^2} = \frac{n(n-m)}{m^2}.
\]

Building on this, we now introduce the \textbf{Markov chain model for client selection} in the federated learning setting. Each state of the Markov chain represents the \textbf{age} of a client, i.e., the number of rounds since the client was last selected. The Markov chain consists of \( m'+1 \) states, where state \( j \in \{0, 1, ..., m'\} \) represents the client's current age, with \( m' \) being the maximum permissible age before the client remains in a state until selected.

The transitions between these states are governed by the \textbf{transition probabilities} \( p_j \), which represent the probability that a client in state \( j \) is selected. If a client is selected, it transitions to state 0. If not, the client transitions to state \( j+1 \) (if \( j < m' \)) or remains in state \( m' \) (if \( j = m' \)). These transitions are depicted in Figure \ref{markov}.

The steady-state probabilities \(\pi_a\) are determined by the following equations:
\begin{small}
\begin{align}
    \pi_0 &=  \frac{1}{1+\sum_{i=0}^{{m'}-2} \prod_{j=0}^{i}(1-p_j) + \frac{1}{p_{m'}} \prod_{j=0}^{m'-1}(1-p_j)}, \\
    \pi_i &= \frac{\prod_{j=0}^{i-1}(1-p_j)}{1+\sum_{i=0}^{m'-2} \prod_{j=0}^{i}(1-p_j) + \frac{1}{p_{m'}} \prod_{j=0}^{m'-1}(1-p_j)} \\& \quad \text{for } i \in \{1,\dots,m'-1\}, \\
    \pi_{m'} &= \frac{\frac{1}{p_{m'}}\prod_{j=0}^{m'-1}(1-p_j)}{1+\sum_{i=0}^{m'-2} \prod_{j=0}^{i}(1-p_j) + \frac{1}{p_{m'}} \prod_{j=0}^{m'-1}(1-p_j)}.
\end{align}
\end{small}
The goal of this selection process is to minimize the \textbf{variance of the load metric \( X \)}, which represents the number of rounds between consecutive selections of a client. Minimizing this variance promotes consistent client updates and balanced participation in the training process.

As per our previous work \cite{10901731}, the optimal transition probabilities are given by the following theorem:

\begin{theorem}\label{thm:optimal_general}
Consider an arbitrary \( m' \ge 1 \).
\begin{itemize}
    \item When \( m' \le \lfloor \frac{n}{m} \rfloor-1 \), the optimal values of \( p_i \) for minimizing the variance are:
    \[
    [p_0^*, p_1^*, \dots, p_{m'-1}^*, p_{m'}^*] = [0, 0, \dots, 0, \frac{1}{\frac{n}{m} - m'}],
    \]
    and the minimum variance is:
    \[
    \left( \frac{n}{m} - m' \right)\left( \frac{n}{m} - (m'+1) \right).
    \]
    
    \item When \( m' \ge \lfloor \frac{n}{m} \rfloor \), setting \( i = \lfloor \frac{n}{m} \rfloor \), the optimal values are:
    \begin{align}
    &[p_0^*, p_1^*, \dots, p_{i-2}^*, p_{i-1}^*, p_i^*, \dots, p_{m'}^*] \notag \\
    &= [0, 0, \dots, 0, i+1 - \frac{n}{m}, 1, \dots, 1],
    \end{align}
    and the minimum variance is:
    \[
    c(1 - c),
    \]
    where \( c = \frac{n}{m} - \lfloor \frac{n}{m} \rfloor \).
\end{itemize}
\end{theorem}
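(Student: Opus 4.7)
My plan is to translate the constraint $\pi_0 = m/n$ into $E[X] = n/m =: \mu$ via the renewal identity $E[X] = 1/\pi_0$ for the return time to state $0$, so that minimizing $\operatorname{Var}[X]$ reduces to minimizing $E[X^2]$, and then to combine a universal lower bound for integer-valued random variables of mean $\mu$ with a structural argument enforcing the Markov chain's geometric-tail constraint. As a preliminary step, I would read off the law of $X$ via its hazard rates $h_k := P(X = k \mid X \ge k)$: direct path counting gives $h_k = p_{k-1}$ for $1 \le k \le m'$ and $h_k = p_{m'}$ for $k \ge m'+1$, so the feasible distributions are exactly those on $\{1,2,\ldots\}$ with mean $\mu$ whose tail decays geometrically from index $m'+1$.

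The universal lower bound follows from the pointwise inequality $(X - \lfloor \mu \rfloor)(X - \lceil \mu \rceil) \ge 0$, which holds for integer-valued $X$ because its two factors share a sign; taking expectations gives $\operatorname{Var}[X] \ge c(1-c)$ with $c = \mu - \lfloor \mu \rfloor$, with equality iff $X$ is supported on $\{\lfloor \mu \rfloor, \lfloor \mu \rfloor + 1\}$. This settles Case~2 ($m' \ge \lfloor n/m \rfloor =: i$) immediately: the stated $p_j^\ast$ place all probability on $\{i, i+1\}$ and therefore attain the bound $c(1-c)$.

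For Case~1 ($m' \le \lfloor n/m \rfloor - 1$) the two-point optimum is infeasible, because the support $\{\lfloor n/m \rfloor, \lfloor n/m \rfloor + 1\}$ lies entirely in the tail region $k \ge m'+1$ where constant hazard rate forbids concentration on two adjacent integers. I would then run a compensated-perturbation argument: for any feasible $(p_0, \ldots, p_{m'})$ with $p_k > 0$ for some $k < m'$, I would exhibit a decrement $\delta p_k < 0$ coupled with an adjustment of $p_{m'}$ that preserves $E[X] = \mu$ and strictly decreases $E[X^2]$. Iterating forces $p_0^\ast = \cdots = p_{m'-1}^\ast = 0$, reducing $X$ to $m' + G$ with $G$ geometric of parameter $p_{m'}^\ast$; the mean constraint then pins $p_{m'}^\ast = 1/(\mu - m')$, and the geometric variance formula yields $\operatorname{Var}[X] = (1 - p_{m'}^\ast)/(p_{m'}^\ast)^2 = (\mu - m')(\mu - m' - 1)$, as claimed.

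The main obstacle lies in the Case~1 perturbation step, because a change in $p_k$ propagates through the cumulative product $\prod_{j<k}(1-p_j)$ and reshapes every tail probability $P(X \ge j)$ for $j > k$. I expect to handle the coupling either by a KKT analysis in the log-parametrization $\ell_j := \log(1-p_j)$, in which the tail probabilities become exponentials of partial sums and the Lagrangian nearly separates, or by using the identity $E[X^2] = \sum_{k \ge 1}(2k-1)\,P(X \ge k)$ to collapse the gradient computation into a scalar convexity check that certifies $p_k^\ast = 0$ for every $k < m'$ as the unique minimizer.
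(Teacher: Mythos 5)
First, note that the paper does not actually prove Theorem~\ref{thm:optimal_general} in this manuscript: it imports the result from the authors' prior work \cite{10901731}, so there is no in-paper proof to compare against. Evaluating your proposal on its own terms: the overall architecture is sound and much of it is genuinely nice. The renewal identity $\mathbb{E}[X]=1/\pi_0=n/m$ is the right way to convert the constraint, the hazard-rate description of the feasible laws ($h_k=p_{k-1}$ for $k\le m'$, constant $p_{m'}$ afterwards) is correct, and the pointwise inequality $(X-\lfloor\mu\rfloor)(X-\lceil\mu\rceil)\ge 0$ does yield $\operatorname{Var}[X]\ge c(1-c)$ for any integer-valued $X$ of mean $\mu$, with equality exactly on two-point supports. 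Since the stated parameters in Case~2 produce a feasible law supported on $\{i,i+1\}$ with the right mean, Case~2 is completely proven. Your verification that the Case~1 candidate is feasible, has mean $\mu$, and has variance $(\mu-m')(\mu-m'-1)$ is also correct.

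The genuine gap is the one you yourself flag: optimality in Case~1 is asserted via a ``compensated perturbation'' that is never carried out, with two alternative strategies named but not executed. This is not a routine detail — it is the entire content of Case~1, since the $c(1-c)$ bound is unattainable there and you must show that no other allocation of the free hazards $p_0,\dots,p_{m'-1}$ beats the pure geometric tail. The difficulty is real: perturbing $p_k$ rescales every tail probability $P(X\ge j)$ for $j>k+1$, and the compensating change in $p_{m'}$ enters both $\mathbb{E}[X]$ and $\mathbb{E}[X^2]$ nonlinearly, so ``strictly decreases $\mathbb{E}[X^2]$'' needs an actual computation or a monotonicity/majorization lemma (e.g., showing the candidate's tail sums $P(X\ge k)$ dominate those of every feasible competitor for all $k$, and then applying $\mathbb{E}[X^2]=\sum_{k\ge1}(2k-1)P(X\ge k)$ with increasing weights). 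A spot check (e.g., $\mu=5$, $m'=2$, $p_1=q$) confirms $\mathbb{E}[X^2]$ is increasing in $q$, so the claim is almost certainly true, but as written the proposal is a correct and well-motivated plan for Case~1 rather than a proof of it.
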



Having established our AoI‐based client selection framework and derived key metrics—such as the load metric and optimal transition probabilities—we now shift our focus to understanding how these selection policies influence the overall learning process. In the following section, we analyze the convergence behavior of federated averaging. This analysis examines how balanced client participation contributes to more stable global updates and accelerates convergence towards the optimal model. By linking our earlier findings on load variance and selection skew to the theoretical underpinnings of convergence, we demonstrate that the benefits of our client selection approach extend beyond equitable participation. Specifically, the analysis reveals the relationship between client selection dynamics and the efficiency of the learning process, providing deeper insights into the interplay between theory and performance.
\section{Convergence Analyses}
This section details the convergence results for federated learning under a biased client selection mechanism, considering scenarios with partial device participation. Our discussion includes an analysis of the local-global objective discrepancy, symbolized by 
$\Gamma$, alongside the impact of selection bias, represented by 
$\underline{\rho}$ and 
$\overline{\rho}$. 
Theorem \ref{th1}, introduces our study on the convergence properties of federated averaging influenced by a biased client selection approach and illustrates
the dependence of FL convergence on the statistical properties of the aggregation weights.
\section*{Assumptions and Definitions}
\begin{table*}[htbp]
    \centering
    \caption{Notations.}\label{tab:notations}
    \begin{tabular}{@{}ll@{}}
    \toprule
    Symbol & Description \\
    \midrule
    $n$ & Number of clients \\
    $K$ & Number of local SGD \\
    $m$ & Number of selected clients\\
    $m'$ & Maximum permissible age\\
    $L$ & Lipschitz smoothness parameter \\
    $\mu$ & $\mu$-strongly convex\\
    $\eta_t$ & Learning rate \\
    $\theta^t$ & Global model at server iteration $t$ \\
    $\theta^*$ & Optimum of the federated loss function \\
    $\theta_i^{t+1}$ & Local update of client $i$ on model $\theta^t$ \\
    $y_{i,k}$ & Local model of client $i$ after $k$ SGD ($y_{i,K} = \theta_i^{t+1}$ \& $y_{i,0} = \theta^t$) \\
    $q_i$ & Importance of client $i$ in the federated loss function \\
    $p_a$ & Selection probability for a client at age \(a\) in the Markov chain\\
    $S(h,t)$ & Set of participating clients given State $h$ in Iteration $t$ \\
    $\omega_i(h',t)$ & Aggregation weight for client $i$ given State $h'$ in Iteration $t$  \\
    $\omega_i$ & Aggregation weight for client $i$ \\
    $\mathbb{E}_t[\cdot]$ & Expected value conditioned on $\theta_t$ \\
    $F(\theta)$ & Global loss function \\
    $F_i(\theta)$ & Local loss function of Client $i$ \\
    $\nabla F_i(\theta)$ & Gradient of Client $i$.\\
    $g_i(\theta,\mathcal{B})$ & Stochastic gradient of Client $i$ using mini-batch $\mathcal{B}$. 
    \\
    $G^2$ & Bound on the variance of the stochastic gradients \\
    \bottomrule
    \end{tabular}
\end{table*}

 Table \ref{tab:notations} lists all the notations in the federated learning problem. 

Consider the global loss function $F(\theta)$ defined as:
\begin{align}
F(\theta) =  \sum_{i=1}^{n} q_i F_i(\theta) 
\end{align}
where $F_i(\theta) = \frac{1}{|D_i|} \sum_{x \in D_i} f(\theta; x)$ is the local objective function of client \(i\) with dataset $D_i$ and $q_i$ is the importance of client $i$ in the federated loss function.  
Let the global optimal model be $\theta^* = \arg\min_{\theta} F(\theta)$ and the
local optimum be $\theta^*_i = \arg\min_{\theta} F_i(\theta)$, $i \in \{1,\dots,n\}$.
We define $
F^* = \underset{\theta}{\min} F(\theta) = F(\theta^*)$ and $F^*_i = \underset{\theta}{\min} F
_i(\theta) = F_i(\theta^*_i)$.

Under a given policy, the set of selected clients is denoted by $S(h,t) \subseteq \{1,2, \dots,n\}$, where $h$ represents the state and $t$ is the iteration. Depending on the policy, $h$ can include the model parameters and the ages of the clients, etc. 
Moreover, the policy also determines $\omega_k(h',t)$, the aggregation weight for Client $k \in S(h,t)$, where $t$ is the iteration, and $h'$ is the state that can include the selected subset $S(h,t)$, the model parameters and the ages of the clients, etc.  
These weights ensure that contributions from different clients are properly balanced. 
Note that for fixed $t$ and $h$ (or $h'$), the variable $S(h,t)$ (or $\omega_k(h',t)$) is random due to the probabilistic nature of the policy. Moreover, for each realization of the federated learning steps, the state $h$ (or $h'$) is also random for a fixed $t$. When it is clear from the context, we omit $h,h'$ and/or $t$ in our notations.

Assume the number of selected clients is a constant $m$: 
\begin{align}
    |S|=m, 
\end{align}
and the weights satisfy the normalization condition,
\begin{equation}
    \sum_{i \in S} \omega_i =1,
\end{equation}
ensuring that the sum of weights for the selected clients $S$ equals one, and that the global model is a convex combination of local models. We define the variability and expectation of the weighting strategy by:
\begin{align}
    &\gamma_i = \mathbb{E}[\omega_i^2]= \text{Var}[\omega_i] +  \mathbb{E}[\omega_i]^2, \label{eq:gamma}\\
    &\Sigma = \sum_{i=1}^n \text{Var}[\omega_i], \label{eq:sigma}
\end{align}
where the variance and expectation are taken over the randomness of the states, $S$, and $\omega_i$. Whenever we take expectations with respect to the random policy, we need to take the maximum over all $t$, because the states $h$ and $h'$ may be of different distributions for different $t$.
Here, $\gamma_i$ captures the variance and squared expectation of the weights for client $i$, and $\Sigma$ aggregates these variances across all clients, reflecting the diversity in client contributions. 

{\bf Global aggregation.}
We introduce the global aggregation as follows:
\begin{equation}
    \theta^{t+1} = \sum_{i \in S} \omega_i \theta_i^{t+1}, \label{eq:global}
\end{equation}
where $\theta^{t+1}$ represents the global model parameters at iteration $t+1$, aggregated as a weighted sum of local model parameters $\theta_i^{t+1}$ from each client $i$, with weights $\omega_i = \omega_i(h',t)$.  

{\bf Local update.} 
In Iteration $t$, client $i$ runs $K$ epochs using random mini-batches of local data. Let $y_{i,k}^t, \mathcal{B}_{i,k}^t$ represent the local model parameters and the random mini-batch for the $k$-th epoch, respectively. 
In the first local epoch, the local model is set to be equal to the global model:
\begin{align}
    y_{i,k}^t = \theta^{t}.
\end{align}
The local gradient computation is defined by:
\begin{equation}
    d_i^{t} =\frac{1}{K} \sum_{k=0}^K g_i(y_{i,k}^t, \mathcal{B}_{i,k}^t), \label{eq:K_epochs}
\end{equation}
which is the average gradient for client $i$ at time $t$, computed over $K$ stochastic gradient epochs.  

Subsequently, local model updates are computed as:
\begin{equation}
    \theta_i^{t+1} =\theta^{t} - \eta_t K d_i^{t},\label{eq:local}
\end{equation}
indicating the adjustment of client $i$'s model parameters based on the global model parameters $\theta^{t}$, the learning rate $\eta_t$, and the average gradient $d_i^t$.

\begin{assump}\label{assumpt:1}
$F_1, \ldots, F_n$ are all $L$-smooth, i.e., for all $v$ and $u$,
\begin{align}
F_i(v) \leq F_i(u) + (v - u)^T \nabla F_i(u) + \frac{L}{2} \|v - u\|^2.
\end{align}    
\end{assump}

\begin{assump}
$F_1, \ldots, F_n$ are all $\mu$-strongly convex, i.e., for all $v$ and $u$,
\begin{align}
F_i(v) \geq F_i(u) + (v - u)^T \nabla F_i(u) + \frac{\mu}{2} \|v - u\|^2.
\end{align}    
\end{assump}

\begin{assump}\label{assumpt:3} 
For the mini-batch $\mathcal{B}_i$ uniformly sampled at random from $B_i$ from Client $i$, the resulting stochastic gradient is unbiased, that is, $\mathbb{E}[g_i(\theta_i, \mathcal{B}_i)] = \nabla F_i(\theta_i)$. Also, the variance of stochastic gradients is bounded: 
\begin{align}
\mathbb{E}[\|g_i(\theta_i, \mathcal{B}_i) - \nabla F_i(\theta_i)\|^2] \leq \sigma^2, \quad \forall i \in \{1, \ldots, n\}.
\end{align}
\end{assump}

\begin{assump}\label{assumpt:4} 
The stochastic gradient’s expected squared norm is uniformly bounded, i.e.,
\begin{align}
\mathbb{E}[\|g_i(\theta_i, \mathcal{B}_i)\|^2] \leq G^2 \quad \quad \forall i \in \{1, \ldots, n\}.
\end{align}
\end{assump}


Following similar assumptions as \cite{cho2020client} and \cite{fraboni2022general}, we introduce the following metrics: the local-global objective gap, the selection skew, and the variance of the weights, which feature
prominently in the convergence analysis presented in Theorem \ref{th1}. 
\begin{defn}[Local-Global Objective Gap and Selection Skew]\label{def1}
The Local-Global Objective Gap, \(\Gamma\), is defined as:
\begin{align}
\Gamma = F(\theta^*) - \sum_{k=1}^{n} q_k F_k(\theta_k^*),
\end{align}
where $q_k$ is the importance of client $k$ in the federated loss function, $\theta^*$ is the global optimum, and $\theta_k^*$ is the local optimum. 

The selection skew for averaging scheme \(\omega\) is defined as:
\begin{footnotesize}
\begin{equation}
    \rho(h,h',t; \theta^\prime) = \frac{\mathbb{E} \left[ \sum_{k \in S(h,t)} \omega_k(h',t)(F_k(\theta^\prime) - F_k(\theta_k^* )) \right] }{F(\theta^\prime) - \sum_{k=1}^{n} q_k F_k(\theta_k^*)}, \label{eq:rho}
\end{equation}
\end{footnotesize}
with
\begin{equation}\label{min}
    \underline{\rho} = \min_{h,h',t, \theta^\prime} \rho(h,h',t; \theta^\prime),
\end{equation}
\begin{equation}\label{max}  
    \overline{\rho} = \max_{h,h',t} \rho(h,h',t; \theta^*),
\end{equation}

where \(\omega_k(h',t)\) are the weights that vary per iteration, and the expectation is taken over the random selections and weights for fixed $t,h'$.
It follows from these definitions that \(\overline{\rho} \geq \underline{\rho}\), and $\rho \geq 0$.
\end{defn}

The Local-Global Objective Gap, \(\Gamma\), quantifies the maximum expected discrepancy between the global objective \(F\) and the weighted sum of local objectives \(F_k\) across all iterations. 
The selection skew, \(\rho(h,h',t; \theta^\prime)\), measures the ratio of the expected weighted sum of discrepancies between local objectives \(F_k\) and their optimal values \(F_k(\theta_k^* )\) to the discrepancy between the global objective \(F\) and the weighted sum of local objectives. In equation \eqref{eq:rho}, the numerator takes the expectation over the random choices of \(S(h,t)\) and \(\omega_{k}(h',t)\) based on the policy, for fixed \(h,h',t\).

\begin{theorem}\label{th1}
 Under Assumptions \ref{assumpt:1} and \ref{assumpt:4}, considering a decaying learning rate $\eta_t = \frac{1}{\mu(t+\gamma)}$ where $\gamma = \frac{4K(K+1)L}{\mu}$, and any client selection scheme, the error after $T$ iterations of federated averaging with partial device participation is bounded as follows:
\begin{align}
&\mathbb{E}[F(\theta(t))] - F^* \leq \frac{1}{T+\gamma} \frac{L}{2} \Bigg[ \gamma \|\theta^0 - \theta^*\|^2 \notag \\
&\quad + \frac{1}{\underline{\rho}(K-1)\mu^2} \Big( 16 G^2 K m^2 (K+1)( \Sigma + 1 ) + m K \sigma^2 \Big) \notag \\
&\quad + \frac{6L \Gamma}{(K-1)\mu^2} \Bigg] + \frac{KL}{(K-1)\mu} \Gamma \left(\frac{\overline{\rho}}{\underline{\rho}} - 1\right)
\end{align}

\end{theorem}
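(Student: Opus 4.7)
\noindent\textbf{Proof Plan for Theorem \ref{th1}.} The plan is to adapt the convergence framework of Li et al.\ for FedAvg together with the biased-selection analysis of Cho et al.\ and the aggregation-weight variance analysis of Fraboni et al., so that the roles of $\underline{\rho}, \overline{\rho}$ (selection skew) and $\Sigma$ (weight variance) both appear explicitly in the bound. I would introduce the virtual full-participation iterate $\bar{\theta}^{t+1} = \sum_{i=1}^{n} q_i \theta_i^{t+1}$ and write
\begin{align}
\theta^{t+1}-\theta^{*} = \underbrace{(\bar{\theta}^{t+1}-\theta^{*})}_{\text{full participation}} + \underbrace{(\theta^{t+1}-\bar{\theta}^{t+1})}_{\text{sampling noise}}.
\end{align}
Taking $\mathbb{E}_t[\cdot]$ conditioned on $\theta^t$, I would expand $\|\theta^{t+1}-\theta^{*}\|^2$ and control the three resulting pieces: a deterministic descent term, a cross term, and a quadratic noise term.

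\noindent\textbf{Step 1 (local-drift bound).} Using Assumption \ref{assumpt:4}, I would bound the divergence $\mathbb{E}\|y_{i,k}^t-\theta^t\|^2$ across the $K$ local epochs by a telescoping argument, yielding an $O(\eta_t^2 K^2 G^2)$ term; this is the standard penalty for multiple local SGD steps and is what produces the $K(K+1)$ factor in the final bound.

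\noindent\textbf{Step 2 (cross term via strong convexity and skew).} Using Assumption \ref{assumpt:1} and $\mu$-strong convexity, I would show
\begin{align}
-2\eta_t K \,\mathbb{E}_t\!\left[\langle \theta^t-\theta^{*},\, \textstyle\sum_{i\in S}\omega_i \nabla F_i(\theta^t)\rangle\right]
\end{align}
can be lower-bounded in terms of $F(\theta^t)-F^{*}$ via the selection-skew ratio, pulling out $\underline{\rho}$ on the side of $F_k(\theta^t)-F_k(\theta_k^{*})$ and introducing an additive $\Gamma(\overline{\rho}/\underline{\rho}-1)$ slack to account for the fact that the biased selection does not converge to $\theta^{*}$ in expectation. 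This is exactly where the additive non-vanishing term in the theorem is born.

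\noindent\textbf{Step 3 (quadratic noise term via $\Sigma$).} For $\mathbb{E}_t\|\sum_{i\in S}\omega_i d_i^t\|^2$ I would split into the bias $\|\sum_i q_i \nabla F_i(\theta^t)\|^2$, a stochastic mini-batch variance contribution bounded by $\tfrac{m\sigma^2}{K}$ via Assumption \ref{assumpt:3}, and a weight-variance contribution of the form $\sum_i \gamma_i \mathbb{E}\|d_i^t\|^2$. Using the identity $\gamma_i = \operatorname{Var}[\omega_i]+\mathbb{E}[\omega_i]^2$ together with Assumption \ref{assumpt:4}, the total dependence collapses to the factor $(\Sigma+1)$ that appears multiplying $G^2 K m^2 (K+1)$ in the statement.

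\noindent\textbf{Step 4 (recursion and induction).} Collecting Steps~1--3 gives a one-step recursion of the form
\begin{align}
\mathbb{E}\|\theta^{t+1}-\theta^{*}\|^2 \le (1-\eta_t \mu \underline{\rho})\,\mathbb{E}\|\theta^t-\theta^{*}\|^2 + \eta_t^2 B + \eta_t C,
\end{align}
where $B$ absorbs the $\Sigma,\sigma^2,G^2,K$ constants and $C$ absorbs the $\Gamma(\overline{\rho}/\underline{\rho}-1)$ slack. Plugging in $\eta_t = \frac{1}{\mu(t+\gamma)}$ with $\gamma=\frac{4K(K+1)L}{\mu}$ and arguing by induction (as in Li et al.) that $\mathbb{E}\|\theta^t-\theta^{*}\|^2 \le \frac{v}{t+\gamma}$ for an appropriate $v$, a final application of $L$-smoothness, $\mathbb{E}[F(\theta^t)]-F^{*} \le \tfrac{L}{2}\mathbb{E}\|\theta^t-\theta^{*}\|^2$, delivers the stated bound.

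\noindent\textbf{Main obstacle.} The delicate step is Step~2: the standard trick of writing the cross term as a full-gradient inner product requires the selection to be unbiased, which fails here because $\omega_i$ depends on $h'$ and selection is age-driven. I expect to need to carry $\rho(h,h',t;\theta')$ inside the inner product, replace $F(\theta^t)-\sum_k q_k F_k(\theta_k^{*})$ by $\underline{\rho}\bigl(F(\theta^t)-F^{*}\bigr)$ plus a $\Gamma$-dependent correction, and then absorb the mismatch between $\overline{\rho}$ at $\theta^{*}$ and $\underline{\rho}$ elsewhere into the non-vanishing additive term, which is the only term in the bound that does not contract to zero as $T\to\infty$. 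All other pieces (local-drift control, $\Sigma$-weighted noise, induction with decaying $\eta_t$) are routine adaptations of the cited prior analyses once this skew decomposition is in place.
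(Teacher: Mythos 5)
Your plan is sound and would plausibly deliver the stated bound, but it is not the route the paper takes, and the difference is precisely at the point you flag as the main obstacle. You decompose $\theta^{t+1}-\theta^{*}$ through a virtual full-participation iterate $\bar{\theta}^{t+1}=\sum_i q_i\theta_i^{t+1}$, so your ``noise'' is the client-sampling deviation $\theta^{t+1}-\bar{\theta}^{t+1}$, and you must then fight a non-vanishing cross term because the age-driven selection is biased. The paper never introduces such a virtual iterate: it expands $\|\theta^{t}-K\eta_t\sum_{i\in S}\omega_i d_i^{t}-\theta^{*}\|^2$ directly around the \emph{selected-set} average, splitting only the mini-batch noise $d_i^{t}-\nabla F_i(\theta_i^{t})$ (whose cross term vanishes by Assumption~\ref{assumpt:3}) from the deterministic part, and then handles the selection bias entirely inside the descent term $\mathcal{B}_{\text{grad}}$ by invoking the skew definition \eqref{eq:rho} to replace $\mathbb{E}[\sum_{k\in S}\omega_k(F_k(\cdot)-F_k(\theta_k^{*}))]$ with $\rho\cdot(F(\cdot)-\sum_k q_kF_k(\theta_k^{*}))$, evaluated at $\theta^{t}$ (giving $\underline{\rho}$) and at $\theta^{*}$ (giving $\overline{\rho}$ and hence the additive $\Gamma(\overline{\rho}/\underline{\rho}-1)$ floor). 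The weight-variance factor $(\Sigma+1)$ enters through Lemmas~\ref{lem2} and~\ref{lem3} applied to the local drift $\|\theta^{t}-\theta_k^{t}\|^2$, matching your Step~1/Step~3, and the recursion-plus-induction closing argument with $\eta_t=\beta/(t+\gamma)$ and the final $L$-smoothness step is identical to your Step~4 (the paper's contraction factor is $1-\eta_t\mu K(1+\tfrac{K-1}{K}\underline{\rho})$ rather than your $1-\eta_t\mu\underline{\rho}$, a cosmetic difference). What the paper's route buys is that the unbiasedness of client sampling is never needed, so the obstacle you identify simply does not arise; what your route would buy, if pushed through, is a cleaner separation of sampling noise from optimization progress in the style of Li et al., at the cost of explicitly bounding the bias of $\theta^{t+1}$ around $\bar{\theta}^{t+1}$. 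One small caution: you cite Assumption~\ref{assumpt:1} for strong convexity in Step~2, but that is the smoothness assumption; strong convexity is Assumption~2 (which, like Assumption~\ref{assumpt:3}, the proof genuinely uses even though the theorem statement lists only Assumptions~\ref{assumpt:1} and~\ref{assumpt:4}).
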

\textbf{Remark 1.} A critical observation from Theorem \ref{th1} is that the convergence bound is influenced by the clients' aggregation weights through the quantity $\sum_{i=1}^{n} \text{Var}[\omega_i(S_t)]$. Minimizing these terms, which are non-negative, leads to a smaller $\Sigma$, thereby resulting in faster convergence of the algorithm.

\textbf{Remark 2.} Theorem \ref{th1} also highlights that a larger selection skew, denoted by $\underline{\rho}$, contributes to faster convergence. This is a conservative estimate, since $\underline{\rho}$ is derived from the minimum selection skew $\rho$. Given that $\rho$ varies during training based on the current global model $\theta$ and local models $\theta_k$, the actual convergence rate can be improved by a factor that is at least as large as $\underline{\rho}$.

\textbf{Remark 3.} The term $\frac{KL}{(K-1)\mu } \Gamma \left(\frac{\overline{\rho}}{\underline{\rho}}-1\right)$ represents the bias, which depends on the selection strategy. By the definitions of $\underline{\rho}$ and $\overline{\rho}$, it follows that $\overline{\rho} \geq \underline{\rho}$, implying the existence of a non-zero bias. However, simulation results indicate that even with biased selection strategies, this term can be close to zero, thereby having a negligible impact on the final error floor.

In Theorem \ref{th1}, two critical terms $\Sigma$ and $\rho$ are influenced by the client selection strategy, particularly by the selection policy and the Age of Information (AoI) metric. These factors directly affect the speed at which the federated learning algorithm converges.

The term $\Sigma$ represents the variability in the weights assigned to each client, which are determined by the selection policy. Policies that introduce greater randomness, such as \textit{random selection policies}, result in greater variance in the client weights. This larger $\Sigma$ increases the variability of local model updates, negatively impacting the convergence rate. Specifically, the term $16 G^2 K m^2 (K+1)( \Sigma + 1 )$ in the theorem shows that as $\Sigma$ increases, the upper bound on the error grows, leading to slower convergence. On the other hand, \textit{AoI-based policies} aim to minimize $\Sigma$ by selecting clients with fresher updates more frequently, thus ensuring that their influence on the global model remains stable and reducing the variability in the weights. This reduction in $\Sigma$ improves convergence, as the model updates become more consistent across rounds.

Furthermore, the \textit{ selection skew} $\rho$, defined as the ratio of the expected discrepancy between local and global objectives, affects the convergence behavior. Theorem \ref{th1} introduces the term $\underline{\rho}$, which captures the \textit{minimum selection skew} across all rounds. The term $\frac{1}{\underline{\rho}(K-1)\mu^2}$ in the error bound illustrates that a smaller $\underline{\rho}$ (more imbalance in client selection) leads to a slower convergence rate. \textit{Random selection policies} can lead to smaller $\underline{\rho}$, as they may favor some clients more frequently in a given time frame. In contrast, \textit{AoI-based policies} increase $\underline{\rho}$ by ensuring that clients with more up-to-date information are chosen more regularly. This leads to a more balanced selection process over time, thereby reducing the selection skew and improving convergence. To empirically estimate the selection skew $\rho$, we approximate $\underline{\rho}$ and $\overline{\rho}$ through simulations. Specifically, we train each client individually with a large number of SGD iterations to obtain their local optimal models $\theta_k^*$. The final global model from our experiments serves as an estimate of the global optimum $\theta^*$. At each iteration $t$, we compute the selection skew $\rho(S^t, \theta^t)$ using the current global model $\theta^t$, and $\rho(S^t, \theta^*)$ using the estimated global optimum $\theta^*$. We then record the minimum and maximum values over all iterations, denoted as $\min_t \rho(S^t, \theta^t)$ and $\max_t \rho(S^t, \theta^*)$, respectively. The results for the four policies investigated in this paper are summarized in Table~\ref{table2}. We observe that the Markov policy has a larger $\underline{\rho}$ in most scenarios and therefore, is expected to converge faster roughly.

It is important to note that while the selection skew $\rho$ influences convergence behavior, its value alone does not directly determine the superiority of a client selection policy. The convergence rate, as indicated by Theorem~\ref{th1}, depends on multiple parameters. Therefore, a policy with a higher estimated $\underline{\rho}$ may not necessarily mean the convergence is faster if other factors adversely affect performance. The interplay of these parameters ultimately dictates the effectiveness of a policy, and thus $\rho$ should be considered alongside other metrics when evaluating overall performance.

The effect of \textit{Age of Information (AoI)} is particularly important in the context of federated learning with partial participation. When AoI is considered in the selection policy, clients with lower AoI (fresher updates) are chosen more frequently, reducing the skew in client selection and lowering $\Sigma$. This selection strategy not only reduces the variance in the client weights but also increases $\underline{\rho}$, as the selection becomes more balanced across clients. Theorems like this highlight that careful selection policies based on AoI can accelerate convergence by minimizing both $\Sigma$ and $\frac{1}{\underline{\rho}}$, improving the overall performance of the federated learning algorithm.


In the following, we calculate $\Sigma$ for three different selection policies: \textit{random selection}, \textit{data-size-based selection}, and \textit{AoI-based selection}. For each policy, $\Sigma$ represents the sum variance of the client weights $\omega_i$, which reflects the variability in client participation across rounds. 

\begin{theorem}\label{th3}
Consider a federated learning environment with $n$ clients, where in each round exactly $m$ clients are selected \emph{uniformly at random} as per policy \ref{policy:random}. For each client $i$, let $d_i>0$ be its data size. Let $S\subseteq \{1,\ldots,n\}$ be the randomly chosen subset of size $m$. Define the weight of client $i$ by
\begin{equation}
\label{eq:weight-datasize}
\omega_i \;=\;
\begin{cases}
\dfrac{d_i}{\sum_{j \in S} d_j}, & \text{if } i \in S,\\[1em]
0, & \text{otherwise.}
\end{cases}
\end{equation}
Then the sum of the variances of the weights is given by
\begin{align}
\Sigma
=&\sum_{i=1}^n \mathrm{Var}[\omega_i]\\
=&\;
  \frac{1}{\displaystyle \binom{n}{m}}
  \sum_{\substack{S \subseteq \{1,\dots,n\}\\|S|=m}}
  \;\sum_{i \in S}
  \frac{d_i^2}{\Bigl(\sum_{j \in S} d_j\Bigr)^2}
\;\;\\ \;\;
  -&\sum_{i=1}^n
  \Biggl(
    \frac{1}{\binom{n}{m}}
    \sum_{\substack{S: \, i \in S}}
    \frac{d_i}{\sum_{j \in S} d_j}
  \Biggr)^2.
\end{align}
\end{theorem}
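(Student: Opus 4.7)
\textbf{Proof proposal for Theorem \ref{th3}.}

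The plan is to compute $\mathrm{Var}[\omega_i]$ directly from the definition $\mathrm{Var}[\omega_i] = \mathbb{E}[\omega_i^2] - (\mathbb{E}[\omega_i])^2$, exploiting the fact that under uniform sampling of $m$-subsets, each of the $\binom{n}{m}$ subsets has probability $1/\binom{n}{m}$. Since $\omega_i = 0$ whenever $i \notin S$, both $\omega_i$ and $\omega_i^2$ are supported only on subsets containing $i$, so the expectations restrict to sums over $\{S : i \in S\}$.

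First I would write
\begin{equation}
\mathbb{E}[\omega_i] \;=\; \frac{1}{\binom{n}{m}} \sum_{\substack{S \subseteq \{1,\dots,n\} \\ |S|=m,\; i \in S}} \frac{d_i}{\sum_{j \in S} d_j}, \qquad \mathbb{E}[\omega_i^2] \;=\; \frac{1}{\binom{n}{m}} \sum_{\substack{S : |S|=m \\ i \in S}} \frac{d_i^2}{\bigl(\sum_{j \in S} d_j\bigr)^2},
\end{equation}
which both follow immediately from the definition \eqref{eq:weight-datasize} and the uniform distribution on $S$. Squaring the first expression and subtracting from the second yields $\mathrm{Var}[\omega_i]$ for each $i$.

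Next I would sum $\mathrm{Var}[\omega_i]$ over $i$. For the $\mathbb{E}[\omega_i^2]$ piece, I swap the order of summation: $\sum_{i=1}^n \sum_{S : i \in S} = \sum_{S} \sum_{i \in S}$, since the pair $(i,S)$ with $i \in S$ is enumerated identically either way. This converts the double sum into $\binom{n}{m}^{-1} \sum_{S} \sum_{i \in S} d_i^2 / (\sum_{j \in S} d_j)^2$, matching the first term in the claimed identity. The $(\mathbb{E}[\omega_i])^2$ piece does not simplify further because the square is taken before summing over $i$, so it is left as $\sum_i \bigl( \binom{n}{m}^{-1} \sum_{S : i \in S} d_i/\sum_{j\in S} d_j \bigr)^2$, matching the second term.

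There is no substantive obstacle here; the proof is essentially bookkeeping once one observes that $\omega_i \cdot \mathbf{1}\{i \notin S\} = 0$ kills the irrelevant terms. The only step requiring a brief justification is the swap $\sum_i \sum_{S \ni i} = \sum_S \sum_{i \in S}$, which is a standard Fubini-type exchange on the finite index set $\{(i,S) : i \in S,\, |S|=m\}$. No simpler closed form is available in general because $\sum_{j \in S} d_j$ in the denominator depends on the entire selected subset, preventing factorization over individual clients; the special case $d_1 = \cdots = d_n$ recovers the familiar expression $\Sigma = 1/n - m/n^2 \cdot \text{(correction)}$ and can be mentioned as a sanity check.
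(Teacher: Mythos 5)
Your proposal is correct and takes essentially the same route as the paper: a direct computation of $\mathbb{E}[\omega_i^2]$ and $\mathbb{E}[\omega_i]$ under the uniform distribution on $m$-subsets, where your summation swap $\sum_{i}\sum_{S\ni i}=\sum_{S}\sum_{i\in S}$ is exactly the paper's observation that $\sum_{i=1}^n\omega_i^2=\sum_{i\in S}\bigl(d_i/\sum_{j\in S}d_j\bigr)^2$ followed by linearity of expectation. The only nit is your parenthetical sanity check: in the homogeneous case $d_i=d$ the sum of variances is $\tfrac{1}{m}-\tfrac{1}{n}$ (as in Remark 4), not $\tfrac{1}{n}-\tfrac{m}{n^2}$.
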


\textbf{Remark 4.}\label{remark:homogeneous}
In the special case where the data is homogeneous across all clients, i.e., $d_i = d$ for all $i$, the weight for each selected client simplifies to
\[
\omega_i \;=\; \frac{d_i}{\sum_{j \in S} d_j} \;=\; \frac{d}{m \cdot d} \;=\; \frac{1}{m}.
\]
Hence, the weight distribution is uniform among the $m$ selected clients. Consequently, in this scenario:
\begin{small}
\[
\mathbb{E}[\omega_i] \;=\; \frac{m}{n} \times \frac{1}{m} \;=\; \frac{1}{n},
\quad
\sum_{i=1}^n \omega_i^2 \;=\; \sum_{i \in S} \frac{1}{m^2} \;=\; \frac{m}{m^2} \;=\; \frac{1}{m}.
\]
\end{small}
Taking expectations,
\begin{small}
\[
\mathbb{E}\Bigl[\sum_{i=1}^n \omega_i^2\Bigr] \;=\; \frac{1}{m},
\quad
\sum_{i=1}^n \mathrm{Var}[\omega_i]
\;=\;
\frac{1}{m}
\;-\;
\sum_{i=1}^n \Bigl(\frac{1}{n}\Bigr)^2
\;=\;
\frac{1}{m} - \frac{1}{n}.
\]
\end{small}
\begin{theorem} \label{th4}
Consider a federated learning environment with \(n\) clients, where in each round, \(m\) clients are selected with replacement. The probability $q_i$ of choosing client \(i\) is proportional to the size of its data set, indicated by \(d_i\), according to policy \ref{policy:probabilistic}. The expected sum of the variances of the weights for the selected clients is:
$$
\Sigma=\sum_{i=1}^{n} \text{Var}[\omega_i]= \sum_{i=1}^n \frac{q_i(1 - q_i)}{m}.
$$
\end{theorem}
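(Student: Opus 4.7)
\textbf{Proof proposal for Theorem \ref{th4}.}
The plan is to reduce the computation of $\mathrm{Var}[\omega_i]$ to the variance of a Binomial random variable by exploiting the fact that the $m$ selections are made independently and with replacement. Let $l_i$ denote the number of times client $i$ is selected across the $m$ trials, so that by construction $\omega_i = l_i/m$. Since each of the $m$ independent draws picks client $i$ with probability $q_i$, the random variable $l_i$ is $\mathrm{Binomial}(m,q_i)$ distributed.

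First, I would invoke the standard moments of the Binomial distribution to write $\mathbb{E}[l_i]=mq_i$ and $\mathrm{Var}[l_i]=mq_i(1-q_i)$. Second, I would apply the scaling identity $\mathrm{Var}[aZ]=a^2\mathrm{Var}[Z]$ with $a=1/m$ to obtain
\begin{align}
\mathrm{Var}[\omega_i] \;=\; \frac{1}{m^2}\,\mathrm{Var}[l_i] \;=\; \frac{q_i(1-q_i)}{m}.
\end{align}
Summing the resulting expression over $i\in\{1,\dots,n\}$ immediately yields
\begin{align}
\Sigma \;=\; \sum_{i=1}^n \mathrm{Var}[\omega_i] \;=\; \sum_{i=1}^n \frac{q_i(1-q_i)}{m},
\end{align}
which is exactly the claim.

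There is no real obstacle here; the only subtlety worth emphasizing in the write-up is that the sampling is done \emph{with replacement}, which is what guarantees the independence of the $m$ indicator variables $\mathbf{1}\{\text{draw $t$ selects client $i$}\}$ underlying $l_i$. If the sampling were without replacement, the marginal count $l_i$ would instead be hypergeometric and the per-client variance would acquire a finite-population correction factor. Once independence is highlighted, the rest of the argument reduces to a one-line application of Binomial variance and linearity of summation, so no additional machinery (and in particular no use of Assumptions \ref{assumpt:1}--\ref{assumpt:4}) is needed.
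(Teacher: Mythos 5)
Your proposal is correct and follows essentially the same route as the paper: both arguments identify the selection count of client $i$ as a $\mathrm{Binomial}(m,q_i)$ random variable and deduce $\mathrm{Var}[\omega_i]=q_i(1-q_i)/m$ from its moments (the paper computes $\mathbb{E}[\omega_i]$ and $\mathbb{E}[\omega_i^2]$ explicitly and subtracts, whereas you apply the scaling identity $\mathrm{Var}[l_i/m]=\mathrm{Var}[l_i]/m^2$ directly, which is a minor streamlining of the same idea). Your remark that with-replacement sampling is what makes the count Binomial rather than hypergeometric is a correct and worthwhile clarification.
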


\textbf{Remark 5.}\label{remark5}
In the special case where all datasets are of equal size, i.e., \(d_i = d\) for each \(i\), the selection probabilities become uniform across all \(n\) clients:
\[
q_i = \frac{d_i}{\sum_{j=1}^{n} d_j} = \frac{d}{n \, d} = \frac{1}{n}.
\]
Then each client is equally likely to be chosen in each of the \(m\) selections. Substituting \(q_i = \tfrac{1}{n}\) into
\[
\Sigma = \sum_{i=1}^{n} \text{Var}[\omega_i] = \sum_{i=1}^{n} \frac{q_i(1 - q_i)}{m},
\]
we obtain
\[
\Sigma = \sum_{i=1}^{n} \frac{\tfrac{1}{n}\bigl(1 - \tfrac{1}{n}\bigr)}{m}
= \frac{1}{m} \Bigl(1 - \tfrac{1}{n}\Bigr).
\]
\begin{theorem}\label{th5}
Consider a federated learning environment with $n$ clients. In each round, each client $i$ is in an \emph{age state} $a \in \{0,1,\dots,m'\}$ and is selected with probability $p_{a}$, according to \emph{Policy~\ref{policy:markov}} (Markov selection based on age). Denote by $S \subseteq \{1,\ldots,n\}$ the (random) set of selected clients in that round, assuming always $|S|>0$. Assign weights to the clients as
\begin{equation}
\label{eq:weight-markov}
\omega_i \;=\;
\begin{cases}
\dfrac{1}{\sum_{j \in S} 1}\;=\;\dfrac{1}{\lvert S\rvert}, & \text{if } i \in S,\\[6pt]
0, & \text{otherwise}.
\end{cases}
\end{equation}
Where ages form a Markov chain with a unique stationary distribution 
$\bigl(\pi_0,\pi_1,\dots,\pi_{m'}\bigr)$ and \emph{steady-state} selection probability of a client is
\[
   p_{\mathrm{avg}} 
   \;=\;
   \sum_{a=0}^{m'} \pi_a \,p_a.
\]
Then, the sum of the variances of the weights over all $n$ clients is given by
\begin{align}
\Sigma=
\sum_{i=1}^n \mathrm{Var}[\omega_i]
=\;
\sum_{s=1}^{n}
\frac{1}{s}
\,\binom{n}{s}\,\bigl(p_{\mathrm{avg}}\bigr)^{s}
\bigl(1 - p_{\mathrm{avg}}\bigr)^{n-s}
\;-\;
\frac{1}{n},\
\label{eq:markov-var-result}
\end{align}
\end{theorem}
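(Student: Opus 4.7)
The plan is to exploit the symmetry among clients together with the identity $\sum_i \omega_i^2 = 1/|S|$ that holds whenever the weights are uniform over the selected set. I would proceed in four steps.

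First, I would establish the marginal law of the selection indicator. Under Policy~\ref{policy:markov} each client runs its own age-Markov chain independently of the other clients, and in steady state the age $A_i$ of client $i$ has distribution $\pi$. Hence the event $\{i \in S\}$ has probability $\sum_{a=0}^{m'} \pi_a p_a = p_{\mathrm{avg}}$, and because the chains are independent across clients, the indicators $\{\mathbf{1}_{i\in S}\}_{i=1}^n$ are i.i.d.\ Bernoulli$(p_{\mathrm{avg}})$. Consequently $|S| \sim \mathrm{Binomial}(n, p_{\mathrm{avg}})$, conditioned on the assumption $|S|>0$.

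Second, I would compute the first moment sum. Since $\sum_{i \in S} \omega_i = 1$ deterministically on $\{|S|>0\}$, we have $\sum_{i=1}^n \mathbb{E}[\omega_i] = 1$, and by the i.i.d.\ symmetry derived above each $\mathbb{E}[\omega_i] = 1/n$. Therefore $\sum_{i=1}^n (\mathbb{E}[\omega_i])^2 = 1/n$.

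Third, I would compute the second moment sum using the key identity
\begin{equation*}
\sum_{i=1}^n \omega_i^2 \;=\; \sum_{i\in S}\frac{1}{|S|^2} \;=\; \frac{1}{|S|},
\end{equation*}
so that $\sum_{i=1}^n \mathbb{E}[\omega_i^2] = \mathbb{E}[1/|S|]$. Plugging in the binomial law of $|S|$ gives
\begin{equation*}
\mathbb{E}\!\left[\tfrac{1}{|S|}\right] = \sum_{s=1}^{n} \frac{1}{s}\binom{n}{s}\bigl(p_{\mathrm{avg}}\bigr)^{s}\bigl(1-p_{\mathrm{avg}}\bigr)^{n-s}.
\end{equation*}
Combining the second and first moment sums via $\sum_i \mathrm{Var}[\omega_i] = \sum_i \mathbb{E}[\omega_i^2] - \sum_i (\mathbb{E}[\omega_i])^2$ yields exactly~\eqref{eq:markov-var-result}.

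The main obstacle is the first step: justifying that across clients the selection indicators are i.i.d.\ Bernoulli$(p_{\mathrm{avg}})$. This requires two ingredients that are implicit in Policy~\ref{policy:markov}: (i) the age processes of different clients evolve as independent Markov chains, and (ii) the system is analyzed in its stationary regime so that each age marginal equals $\pi$. Once these are articulated, the remaining calculations are essentially bookkeeping because the uniform-weight structure collapses $\sum_i \omega_i^2$ to $1/|S|$ and removes any dependence on \emph{which} clients are selected.
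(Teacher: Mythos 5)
Your proposal is correct and follows essentially the same route as the paper's proof: establish that $|S|\sim\mathrm{Binomial}(n,p_{\mathrm{avg}})$ from the stationary marginals and cross-client independence, use $\sum_{i\in S}\omega_i=1$ with symmetry to get $\mathbb{E}[\omega_i]=1/n$, and collapse $\sum_i\omega_i^2$ to $1/|S|$ so that the second-moment sum becomes $\mathbb{E}[1/|S|]$. You even flag the same assumptions (independent age chains, steady-state analysis, and the $|S|>0$ convention) that the paper invokes, so there is nothing substantive to add.
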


\section{Numerical Results}
In this section, we evaluate the model's load distribution, convergence, and accuracy in scenarios where only a subset of clients $(15\%)$ participate in each communication round. We compare four client selection methods: random selection, probabilistic selection based on dataset size, our proposed decentralized policy using the optimal Markov model specifically designed for load balancing, and a Markov model with non-optimal $p_a$ values for load balancing that increases selection probabilities based on each client's age. Experiments are conducted on the MNIST, CIFAR-$10$, and CIFAR-$100$ datasets, using a convolutional neural network (CNN) architecture as in \cite{mcmahan2017communication} for sample classification. For local optimization, we employ stochastic gradient descent (SGD) with a batch size of $50$, running $5$ local epochs per round with an initial learning rate of $0.1$ and a decay rate of $0.998$. The experimental setup includes $100$ devices, $15$ selected per round, and a maximum client age of $10$ in the Markov model. This configuration enables a thorough examination of both optimal and non-optimal selection strategies in terms of load balancing and their effects on FL convergence, allowing for comparison with other models as well.

As shown in Figure \ref{fig:sigma_variance_over_time}, the total variance \(\Sigma\) over $1000$ rounds is compared for four different client selection policies in federated learning. The experiment involves $100$ clients, with $15$ selected per round, and non-uniform dataset sizes. For Policies $3$ and $4$ (Markov-based), we consider steady-state behavior for a fair comparison, as the variance stabilizes after a few rounds. Both Markov-based policies exhibit significantly lower variance compared to random selection (Policy $1$) and probabilistic selection (Policy $2$). Policy $3$ (optimal Markov) achieves the lowest variance (\(0.055\)), indicating the most balanced client selection. Policy $4$ (non-optimal Markov) also performs well with an average variance of \(0.061\). In contrast, random selection (Policy $1$) yields the highest variance (\(0.204\)), while probabilistic selection (Policy $2$) results in (\(0.116\)).

\begin{figure}[htbp]
    \centering
    \includegraphics[width=\columnwidth]{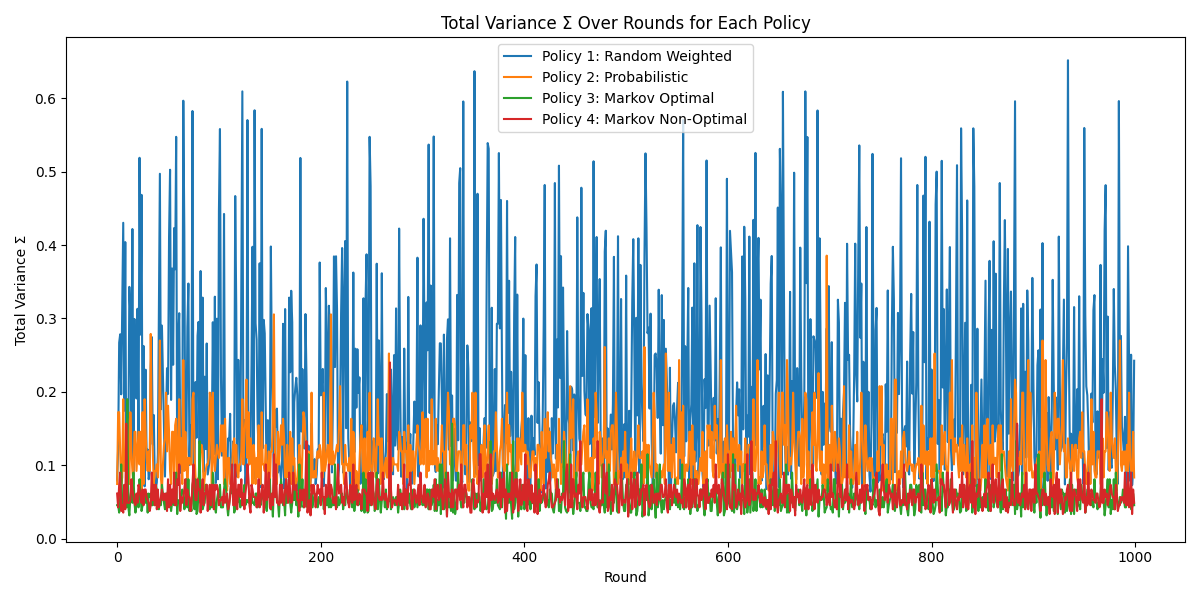}
    \caption{Total Variance \(\Sigma\) over $1000$ rounds. The simulation was performed with \(n = 100\) clients, \(k = 15\) selected clients per round, and a maximum client age of \(m = 10\). The figure compares : Policy 1 (Random Weighted), Policy $2$ (Probabilistic), Policy $3$ (Markov Optimal), and Policy $4$ (Markov Non-Optimal).}
    \label{fig:sigma_variance_over_time}
\end{figure}

\begin{figure}[htbp]
    \centering
    \includegraphics[width=\columnwidth]{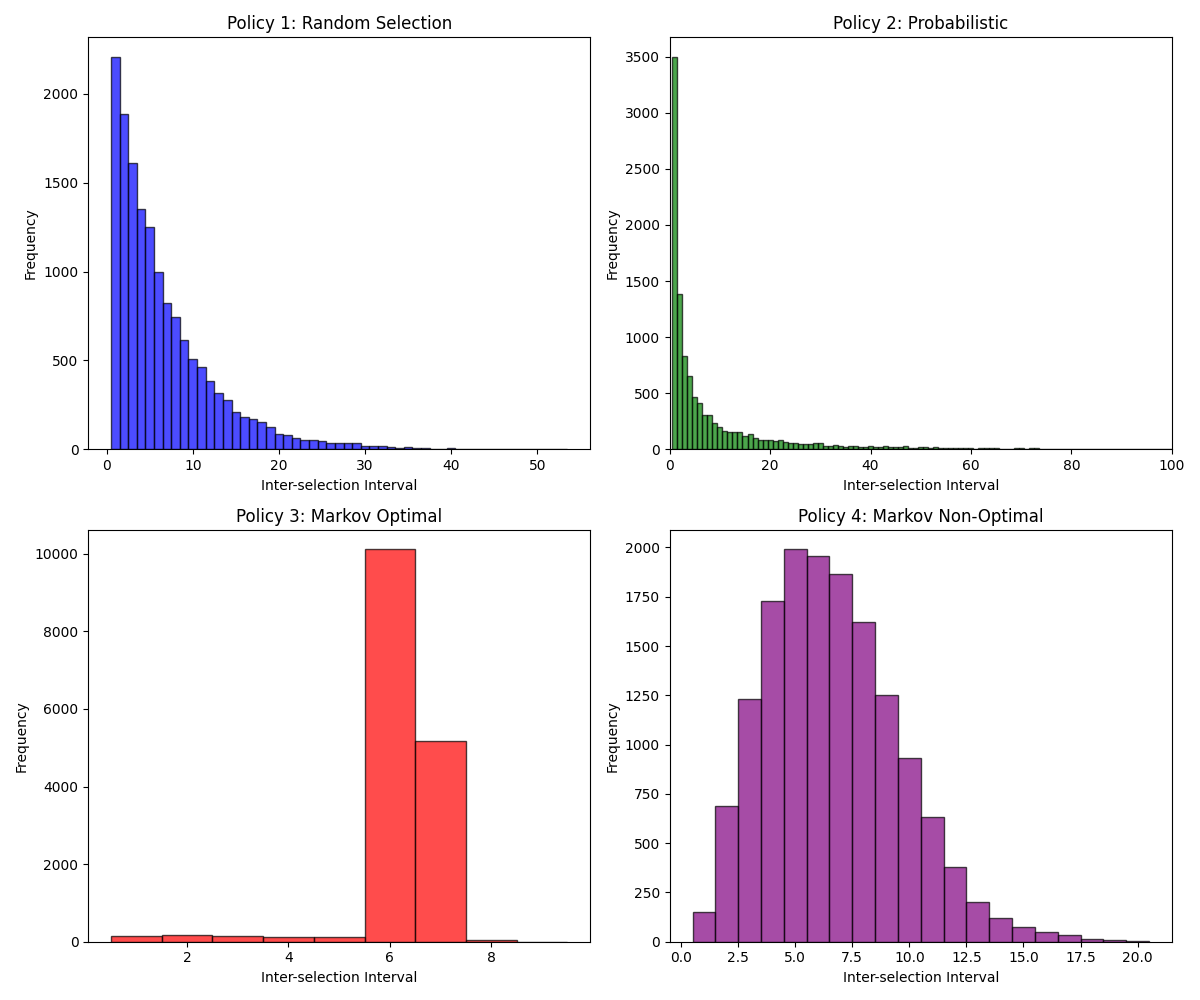}
    \caption{Empirical inter-selection interval distributions for four client selection policies over $1000$ rounds, with $100$ clients and $15$ clients selected per round.}
    \label{inter}
\end{figure}

Figure \ref{inter} presents the empirical inter-selection interval distributions across four client selection policies, reflecting how often clients are re-selected over $1000$ rounds. These distributions were obtained through simulation, and the selection process involves $100$ clients, with $15$ clients selected per round. In Policy $1$ (Random Selection), the histogram declines as the inter-selection interval increases, which is expected in a random selection process where clients are equally likely to be selected, resulting in many short inter-selection intervals and fewer long intervals. For Policy $2$ (Probabilistic Selection), where clients are selected based on their dataset size following a Zipf distribution with shape parameter $a = 2.0$, a similar steep decline is observed. However, the longer tail indicates that clients with larger datasets are selected more frequently, while clients with smaller datasets experience longer gaps between selections, leading to less balanced load distribution.

In contrast, Policy $3$ (Markov Optimal) shows a concentrated peak around $6$ rounds, demonstrating regular client selection, as this policy is optimized to prioritize clients based on their selection age. This results in better load balancing, as the selection intervals are highly consistent, ensuring that all clients are selected at regular intervals with minimal variability. Finally, Policy $4$ (Markov Non-Optimal) displays a broader distribution of intervals, indicating less consistency and regularity in selection compared to the optimal policy, though still more balanced than the random and probabilistic policies. These histograms effectively illustrate the distinct selection patterns of each policy, with Policy $3$ (Markov Optimal) offering the best load balancing, aligning with the theoretical design of these client selection methods.

The metric \( \frac{\sqrt{\text{Var}(Y)}}{T} \) assesses the stability of the client selection process, where \( Y \) represents the number of times each client is selected within a window of size \( T \). By examining the variance of \( Y \), we capture how much the selection frequency fluctuates across clients within each window. Normalizing by \( T \) allows us to account for different window sizes, enabling a fair comparison of client selection variability across time scales. A lower value of this metric indicates a more balanced and stable selection of clients.

\begin{figure}[!t]
    \centering
    \includegraphics[width=\columnwidth]{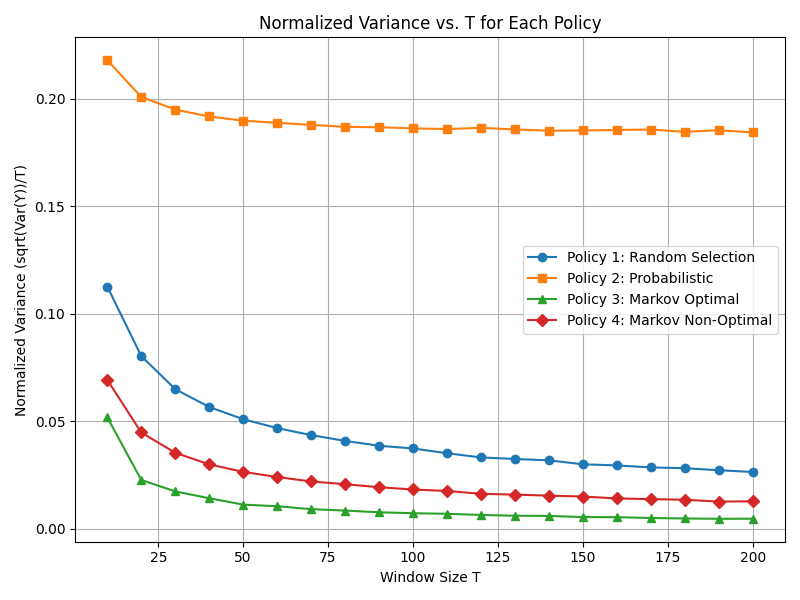}
    \caption{Normalized variance \( \frac{\sqrt{\text{Var}(Y)}}{T} \) versus window size \( T \) for four client selection policies: Random selection (Policy 1), Probabilistic (Policy $2$), Markov Optimal (Policy $3$), and Markov Non-Optimal (Policy $4$). The setup involves $100$ clients, $1000$ rounds, and the selection of $15$ clients per round.}
    \label{var_Y}
\end{figure}

In Figure \ref{var_Y}, we plot this metric across different window sizes \( T \) for four selection policies. As expected, increasing the window size \( T \) generally reduces the normalized variance for all policies, indicating more stable client selection as the time window grows. Notably, Policy $2$ exhibits the highest normalized variance across all window sizes. This reflects the inherent imbalance introduced by assigning selection probabilities based on client data sizes, leading to greater variability in client participation, even as the window size increases. Policy $1$ (Random Selection) shows a lower normalized variance than Policy $2$, but its randomness still results in relatively high variability, particularly at smaller window sizes.

Our proposed policy (Markov Optimal), demonstrates the lowest normalized variance across all window sizes, including small \( T \), indicating that it stabilizes client selection earlier and more effectively than the other methods. This suggests that even with smaller windows, our method ensures a balanced distribution of client selections, making it well-suited for scenarios that require frequent model updates or tight time constraints. Similarly, Policy $4$ (Markov Non-Optimal), while not as stable as Policy $3$, still shows relatively low variance compared to the probabilistic and random policies, benefiting from its Markov structure. 

\begin{figure}[htbp]
    \centering
    \begin{subfigure}[b]{\columnwidth}
        \centering
        \includegraphics[width=\linewidth]{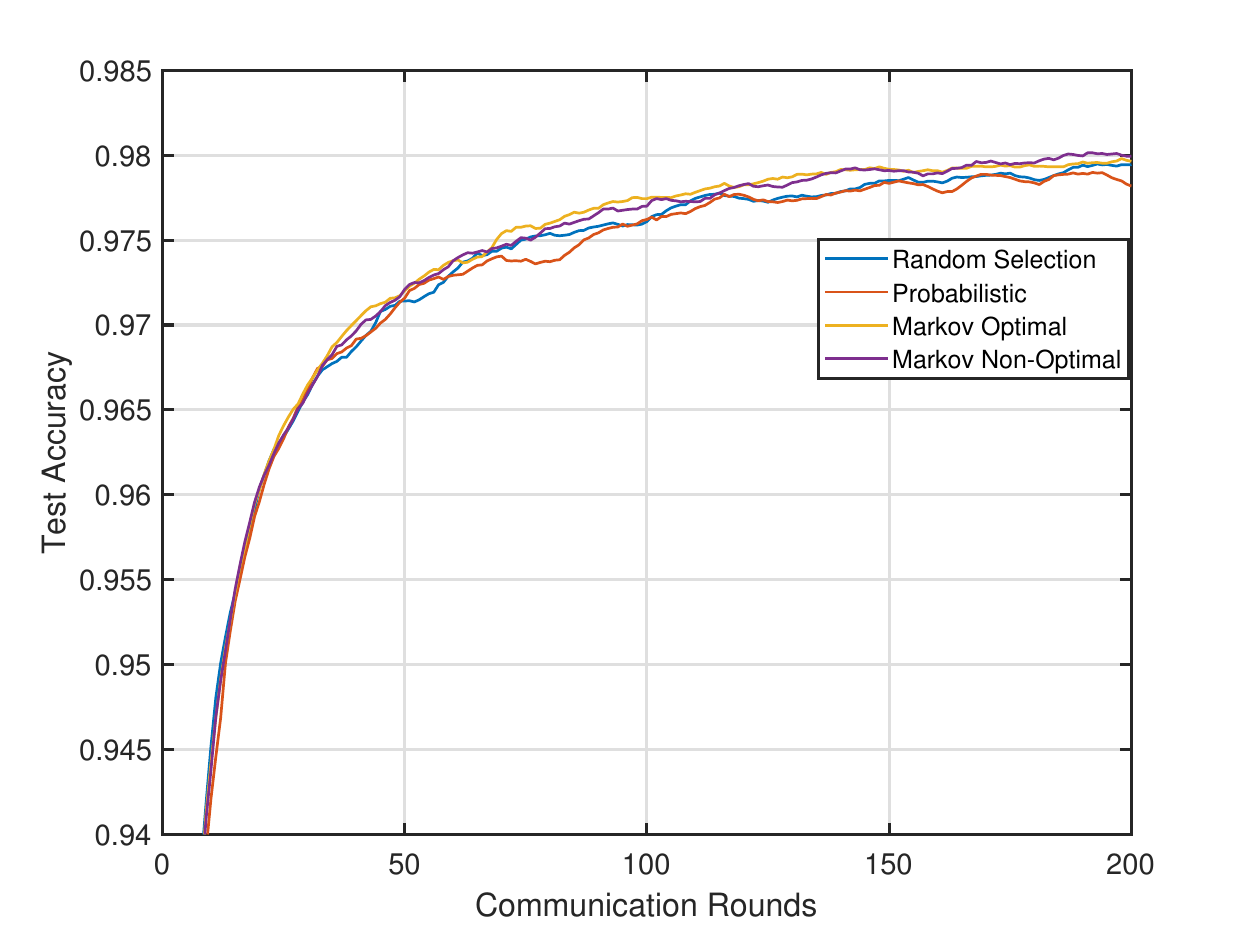}
        \caption{}
        \label{mnist_iid}
    \end{subfigure}
    \vspace{1em} 
    \begin{subfigure}[b]{\columnwidth}
        \centering
        \includegraphics[width=\linewidth]{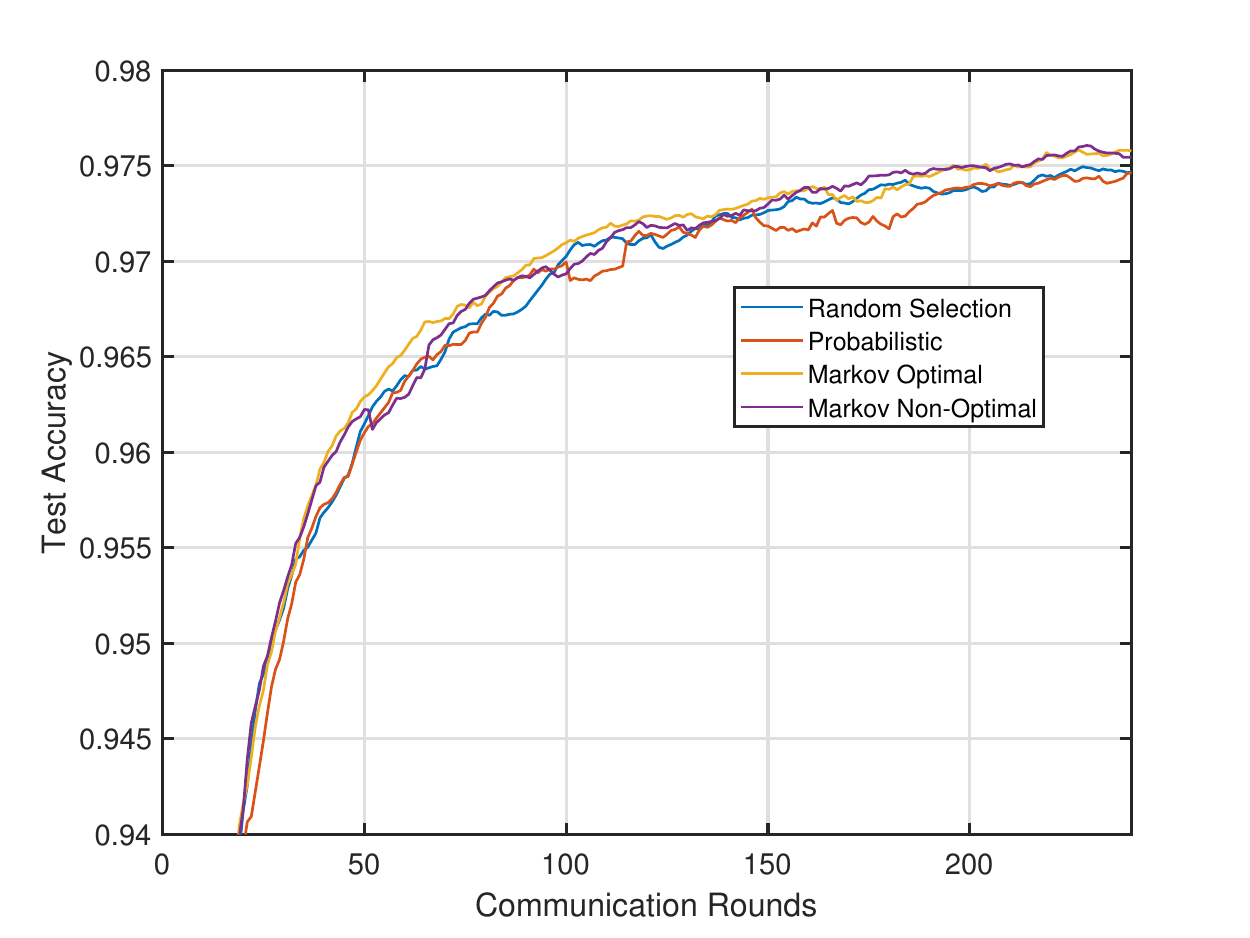}
        \caption{}
        \label{mnist_noniid}
    \end{subfigure}
    \caption{Accuracy on MNIST dataset across different client selection policies: Random selection (Policy $1$), Probabilistic (Policy $2$), Markov Optimal (Policy $3$), and Markov Non-Optimal (Policy $4$). The experiment was conducted with $100$ clients, $15$ clients selected per round, and a maximum client age of $10$. (a) IID dataset. (b) Non-IID dataset (generated using a Dirichlet distribution with $\alpha = 0.3$).}
    \label{mnist_combined}
\end{figure}

As illustrated in Figures \ref{mnist_iid} and \ref{mnist_noniid}, Markov-based client selection policies outperform the FedAvg (random selection) policy for both IID and non-IID scenarios on the MNIST dataset. Specifically, in the IID setting, the Markov optimal policy reaches $97$\% accuracy in $39$ rounds, compared to 45 rounds for random selection, achieving a $13$\% faster convergence. In the non-IID scenario, it achieves the same accuracy in $91$ rounds versus $99$ rounds for random selection, reflecting an $8$\% improvement. Additionally, the Markov optimal policy ensures more consistent and stable learning while enhancing load balancing across clients, which is crucial for maintaining system efficiency and fairness. These results illustrate efficiency and reliability of Markov-based strategies in federated learning on the MNIST dataset, with their ability to facilitate a more balanced and effective training process.

As shown in Figures \ref{cifar10_iid} and \ref{cifar10_iid_non}, Markov-based client selection policies surpass FedAvg (random selection) and probabilistic methods for both IID and non-IID CIFAR-$10$ datasets. In the IID setting, the Markov optimal policy reaches $75$\% accuracy in $95$ rounds, compared to $108$ rounds for random selection, achieving a $12$\% faster convergence. In the non-IID scenario, it attains the same accuracy in $239$ rounds versus $261$ rounds for random selection, marking a $9$\% improvement. Furthermore, the non-optimal Markov policy reaches $105$ rounds in the IID setting and $247$ rounds in the non-IID setting, both of which are faster than random and probabilistic methods but still lag behind the optimal Markov policy. The Markov optimal policy consistently demonstrates faster and more stable convergence, highlighting its effectiveness in load balancing and overall federated learning performance. 

As depicted in Figures \ref{cifar100_iid} and \ref{cifar100_iid_non}, the CIFAR-100 dataset results demonstrate faster convergence across both IID and non-IID data distributions. Specifically, for the IID setting, the Markov optimal policy achieves 30\% accuracy in $139$ rounds, outperforming the non-optimal Markov policy at $158$ rounds, random selection at $154$ rounds, and the probabilistic method at $188$ rounds, representing a $10$-$20$\% faster convergence compared to these methods. In the non-IID scenario, the Markov optimal policy reaches the same $30$\% accuracy in $87$ rounds, compared to $93$ rounds for random selection and 111 rounds for the probabilistic method, indicating a $7.5$\% improvement. The non-optimal Markov policy also performs better than random and probabilistic approaches, but does not match the efficiency of the optimal Markov policy. Additionally, the Markov optimal policy maintains steady improvement and superior load balancing, ensuring that all clients contribute effectively and equitably. These numerical results highlight the advantages of age-based client selection strategies for complex datasets like CIFAR-$100$, demonstrating that the structured design of the Markov optimal policy leads to the most effective and reliable learning performance in federated learning environments.

\begin{figure}[htbp]
    \centering
    \begin{subfigure}[b]{\columnwidth}
        \centering
        \includegraphics[width=\linewidth]{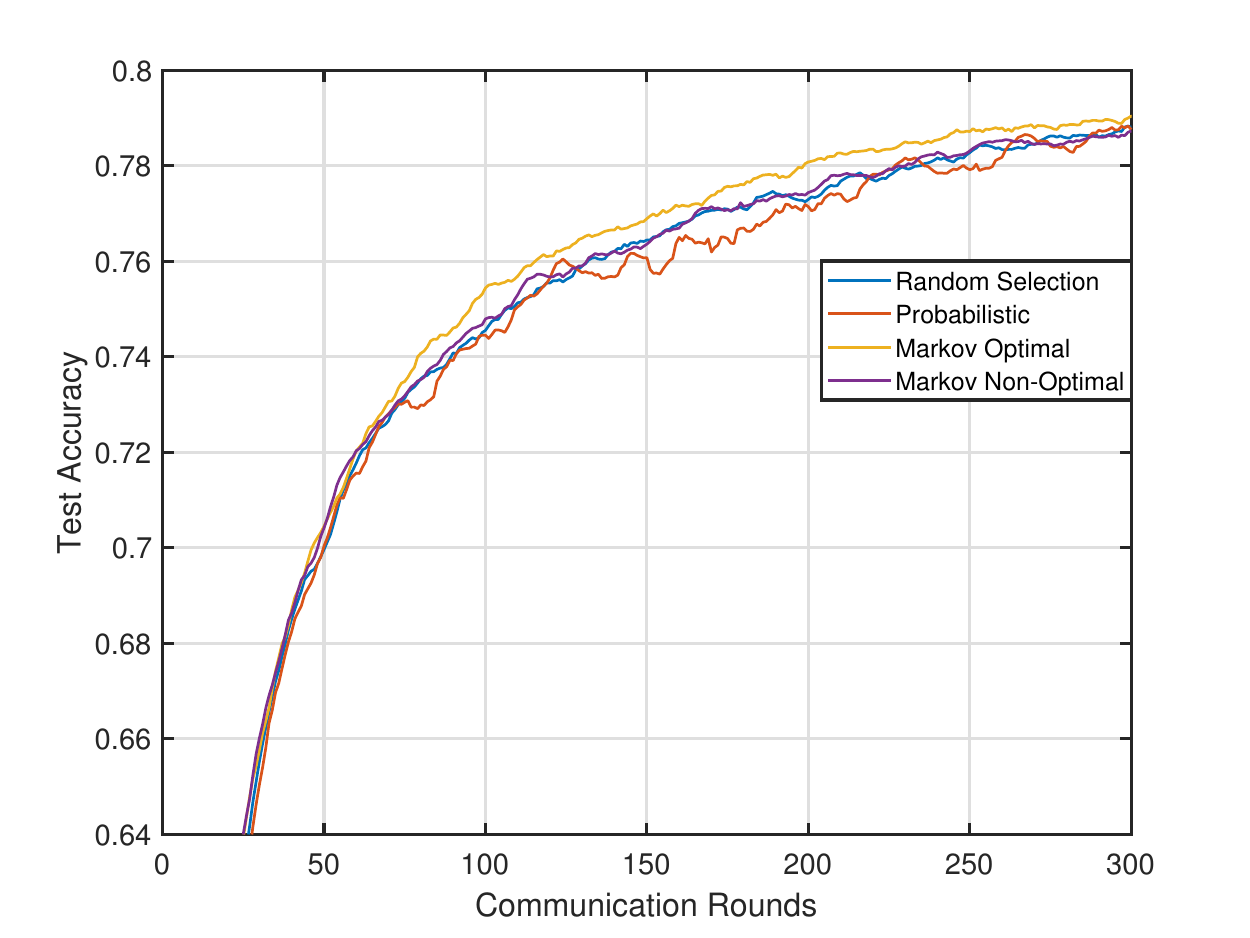}
        \caption{}
        \label{cifar10_iid}
    \end{subfigure}
    \vspace{1em} 
    \begin{subfigure}[b]{\columnwidth}
        \centering
        \includegraphics[width=\linewidth]{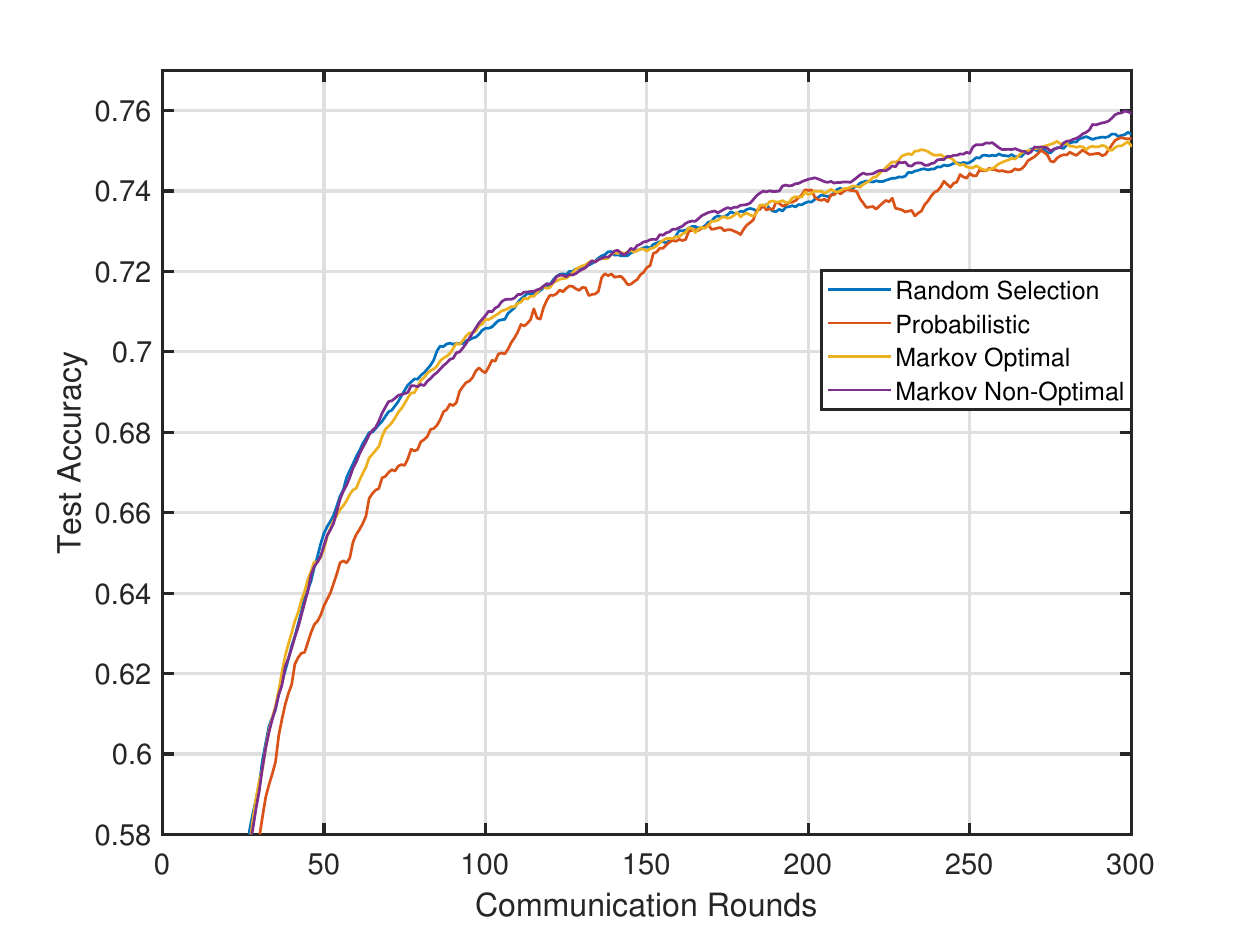}
        \caption{}
        \label{cifar10_iid_non}
    \end{subfigure}
    \caption{Accuracy on CIFAR-$10$ dataset across different client selection policies: Random selection (Policy $1$), Probabilistic (Policy $2$), Markov Optimal (Policy $3$), and Markov Non-Optimal (Policy $4$). The experiment was conducted with $100$ clients, $15$ clients selected per round, and a maximum client age of $10$. (a) IID dataset. (b) Non-IID dataset.}
    \label{cifar10_combined}
\end{figure}

\begin{figure}[htbp]
    \centering
    \begin{subfigure}[b]{\columnwidth}
        \centering
        \includegraphics[width=\linewidth]{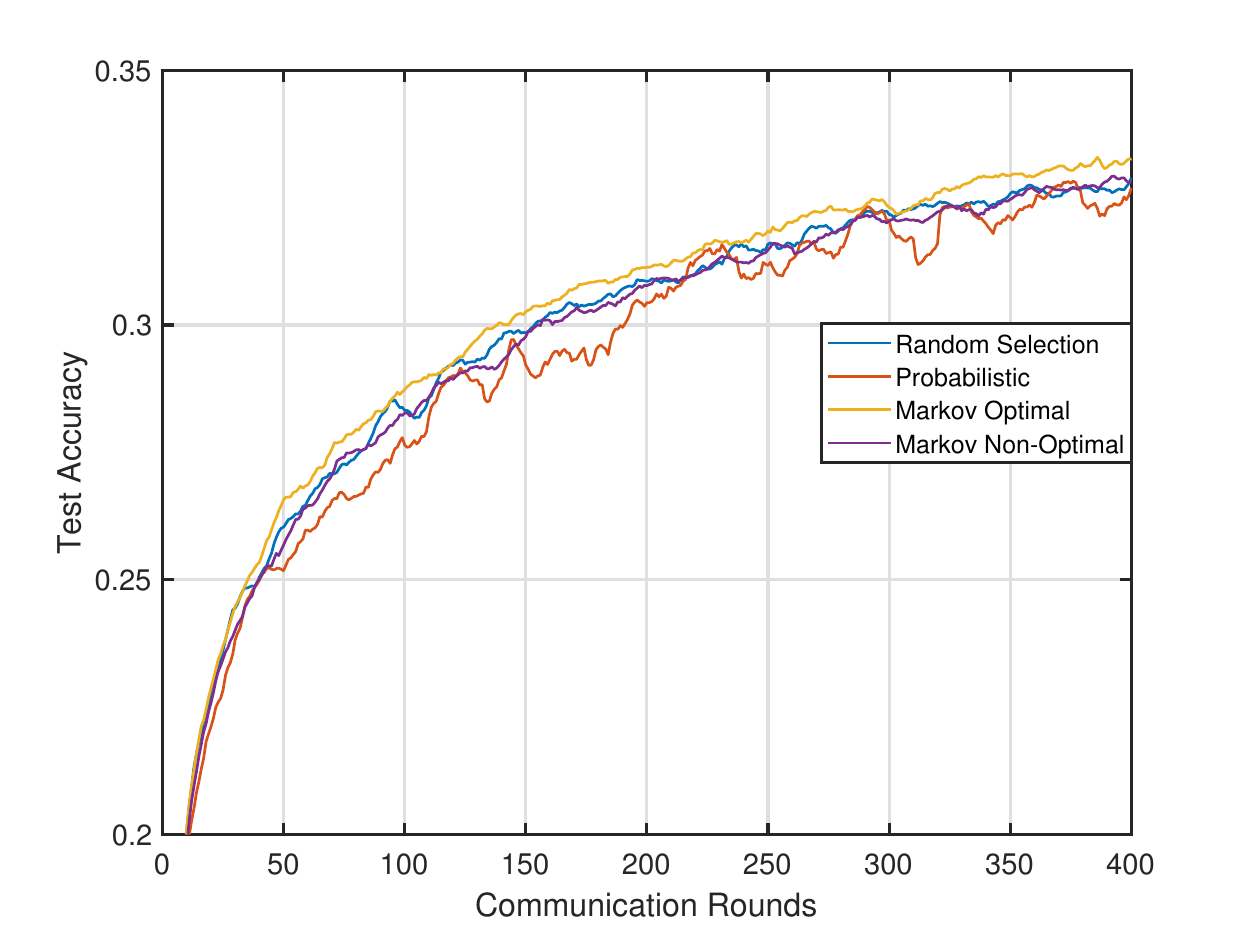}
        \caption{}
        \label{cifar100_iid}
    \end{subfigure}
    \vspace{1em} 
    \begin{subfigure}[b]{\columnwidth}
        \centering
        \includegraphics[width=\linewidth]{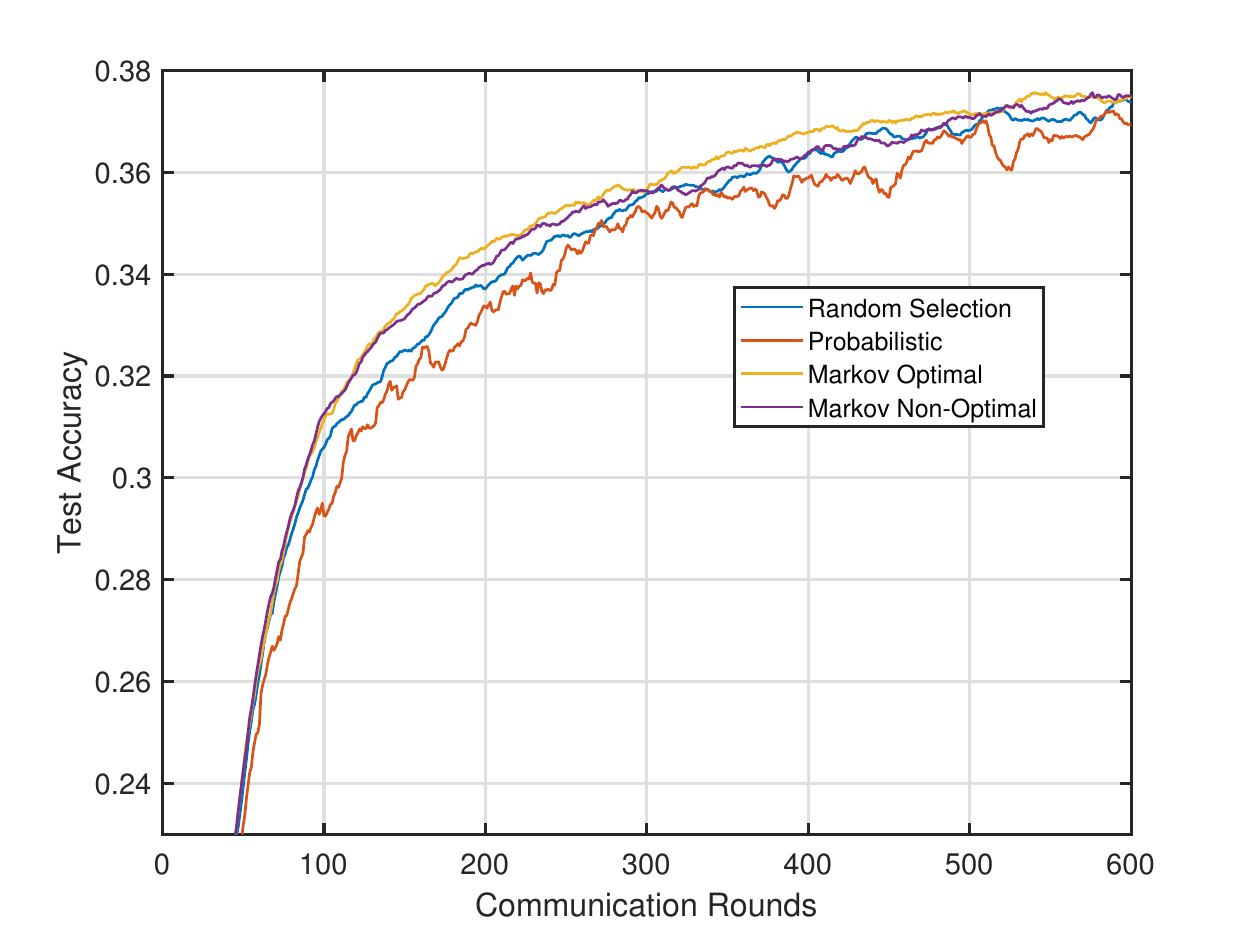}
        \caption{}
        \label{cifar100_iid_non}
    \end{subfigure}
    \caption{Accuracy on CIFAR-$100$ dataset across different client selection policies: Random selection (Policy $1$), Probabilistic (Policy $2$), Markov Optimal (Policy $3$), and Markov Non-Optimal (Policy $4$). The experiment was conducted with $100$ clients, $15$ clients selected per round, and a maximum client age of $10$. (a) IID dataset. (b) Non-IID dataset.}
    \label{cifar100_combined}
\end{figure}

\begin{table*}[t]
    \centering
    \caption{Average empirical $\underline{\rho}$ for different selection policies and datasets.}
    \label{table2}
    \begin{tabular}{|l|c|c|c|}
        \hline
        \textbf{Selection Policy} & \textbf{MNIST} & \textbf{CIFAR10} & \textbf{CIFAR100} \\
        \hline
        Random Selection & 0.98 & 0.96 & 1.33 \\
        \hline
        Probabilistic & 0.83 & 0.85 & 0.93 \\
        \hline
        Markov Optimal & 0.98 & 1.5 &1.52  \\
        \hline
        Markov Non-Optimal  & 0.99 & 1.27 & 1.41 \\
        
        \hline
    \end{tabular}
\end{table*}

\section{Conclusions}
This paper presents a novel Age of Information (AoI)-based client selection mechanism designed to directly address load imbalance and slow convergence. By employing a decentralized Markov scheduling policy, the proposed approach ensures balanced client participation and minimizes the variance in selection intervals. The theoretical framework offers a rigorous convergence proof for the AoI-based selection policy, establishing that it guarantees stable and reliable model convergence. Extensive simulations on datasets such as MNIST, CIFAR-10, and CIFAR-100 revealed that the optimal Markov policy outperformed traditional selection strategies, achieving faster convergence across both IID and non-IID scenarios, with improvements ranging from $7.5$\% to $20$\%. Additionally, the decentralized nature of the proposed method supports scalable implementations with minimal reliance on central coordination, making it well-suited for diverse and dynamic FL environments. This work highlights the potential of AoI as a useful metric for optimizing client selection in FL, paving the way for more efficient and fair collaborative learning systems.
\appendix \label{apen}
\begin{lemma} \label{lem1}
Suppose $F_i:\mathbb{R}^d\to\mathbb{R}$ is $L$-smooth with its global minimum at $\theta_i^*$.  Then for any $\theta_i$ in its domain,
\begin{equation}
    \lVert \nabla F_i(\theta_i)\rVert^2 
    \;\le\;2L\,\bigl(F_i(\theta_i)-F_i(\theta_i^*)\bigr).
    \label{eqn:lemma}
\end{equation}
\end{lemma}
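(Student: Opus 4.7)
The plan is to use the standard argument that an $L$-smooth function can be upper bounded by a quadratic, and then to use this quadratic upper bound to compare the value at a particular gradient-descent step to the global minimum value.

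First, I would invoke the $L$-smoothness inequality from Assumption \ref{assumpt:1} with $u = \theta_i$ and a carefully chosen test point $v$. Specifically, set
\[
v \;=\; \theta_i \;-\; \tfrac{1}{L}\,\nabla F_i(\theta_i),
\]
which is one step of gradient descent with the canonical step size $1/L$. Substituting into the smoothness inequality produces
\[
F_i(v) \;\le\; F_i(\theta_i) \;-\; \tfrac{1}{L}\,\lVert \nabla F_i(\theta_i)\rVert^2 \;+\; \tfrac{L}{2}\cdot\tfrac{1}{L^2}\,\lVert \nabla F_i(\theta_i)\rVert^2 \;=\; F_i(\theta_i) \;-\; \tfrac{1}{2L}\,\lVert \nabla F_i(\theta_i)\rVert^2 .
\]

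Second, I would use the fact that $\theta_i^*$ is a global minimizer, so $F_i(v) \ge F_i(\theta_i^*)$ for the particular $v$ above. Chaining this with the previous display gives
\[
F_i(\theta_i^*) \;\le\; F_i(\theta_i) \;-\; \tfrac{1}{2L}\,\lVert \nabla F_i(\theta_i)\rVert^2,
\]
and rearranging yields the claim $\lVert \nabla F_i(\theta_i)\rVert^2 \le 2L\bigl(F_i(\theta_i) - F_i(\theta_i^*)\bigr)$.

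There is essentially no hard step here; the only subtlety is choosing the right $v$ so that the resulting quadratic in $\lVert\nabla F_i(\theta_i)\rVert^2$ has the correct coefficient. Any other step size would still give an inequality of the same form but with a suboptimal constant, so picking $v = \theta_i - \tfrac{1}{L}\nabla F_i(\theta_i)$ is the key calibration step. Note also that convexity or strong convexity is not needed for this direction of the inequality; $L$-smoothness and the existence of a global minimizer suffice, which is why the lemma will later be applicable in both convex and non-convex bounds in the convergence analysis.
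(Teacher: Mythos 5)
Your proof is correct and is essentially identical to the paper's: both define the auxiliary point $\theta_i - \tfrac{1}{L}\nabla F_i(\theta_i)$, apply the $L$-smoothness inequality to obtain the descent bound $F_i(\theta_i) - \tfrac{1}{2L}\lVert\nabla F_i(\theta_i)\rVert^2$, and then use the fact that $\theta_i^*$ is a global minimizer to conclude. No issues.
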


\begin{proof}  (\textbf{Lemma \ref{lem1}}) \\
Defining the auxiliary point
\[
\theta_i' \;=\;\theta_i \;-\;\frac{1}{L}\,\nabla F_i(\theta_i).
\]
By $L$-smoothness,
\begin{equation}
\begin{split}
F_i(\theta_i')
&\le F_i(\theta_i)
  +\nabla F_i(\theta_i)^{T}(\theta_i'-\theta_i)\\
&\quad+\frac{L}{2}\,\|\theta_i'-\theta_i\|^2.
\end{split}
\end{equation}
Substituting $\theta_i'-\theta_i=-\tfrac{1}{L}\nabla F_i(\theta_i)$:
\begin{equation}
\begin{split}
F_i(\theta_i')
&\le F_i(\theta_i)
  -\frac{1}{L}\,\|\nabla F_i(\theta_i)\|^2\\
&\quad+\frac{1}{2L}\,\|\nabla F_i(\theta_i)\|^2\\
&=F_i(\theta_i)
  -\frac{1}{2L}\,\|\nabla F_i(\theta_i)\|^2.
\end{split}
\end{equation}
With the global minimumat $\theta_i^*$ we have, $F_i(\theta_i^*)\le F_i(\theta_i')$.  Hence
\begin{equation}
\begin{aligned}
F_i(\theta_i^*)
&\le F_i(\theta_i)
  -\frac{1}{2L}\,\|\nabla F_i(\theta_i)\|^2,\\
\|\nabla F_i(\theta_i)\|^2
&\le 2L\,\bigl(F_i(\theta_i)-F_i(\theta_i^*)\bigr).
\end{aligned}
\end{equation}
\end{proof}

\begin{lemma} \label{lem2}
 Let us consider $n$ vectors $x_1, \ldots, x_n$ and a policy with client sampling set $S$ and weight assignment  $\omega_i$, $i \in \{1,2,\dots,n\}$. We have:
 \begin{align}
   \frac{1}{m} \mathbb{E} \Bigl\| \sum_{i \in S} \omega_i x_i \Bigr\|^2
   \;\le\;
   \sum_{i \in S} \gamma_i \,\|x_i\|^2,
   \label{eq17}
 \end{align}
 where the expectation is over the random set $S$ and the random weights $\omega_i$, and $\gamma_i$ is defined in \eqref{eq:gamma}.
\end{lemma}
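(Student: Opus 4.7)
The plan is to establish the bound via a direct application of the Cauchy--Schwarz inequality at the sample level, followed by linearity of expectation.

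First I would fix an arbitrary realization of $S$ and $\{\omega_i\}_{i\in S}$. Because $|S|=m$ with probability one, the Jensen/Cauchy--Schwarz inequality $\|\sum_{j=1}^m y_j\|^2 \le m \sum_{j=1}^m \|y_j\|^2$ applied to $y_j = \omega_j x_j$ yields
\begin{equation}
    \Bigl\| \sum_{i \in S} \omega_i x_i \Bigr\|^2 \;\le\; m \sum_{i \in S} \omega_i^2 \,\|x_i\|^2. \label{eq:cs-step}
\end{equation}
Note that this step is purely pointwise: it uses only $|S|=m$ and does not invoke the normalization $\sum_{i\in S}\omega_i = 1$ or any probabilistic structure.

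Next, since $\omega_i = 0$ for every $i \notin S$ by convention, the random sum on the right-hand side of \eqref{eq:cs-step} equals the deterministically-indexed sum $m\sum_{i=1}^n \omega_i^2 \|x_i\|^2$. Taking expectation over the joint randomness of $S$ and $\{\omega_i\}$, using linearity, and recalling the definition $\gamma_i = \mathbb{E}[\omega_i^2]$ from \eqref{eq:gamma}, I obtain
\begin{equation}
    \mathbb{E}\Bigl\|\sum_{i\in S}\omega_i x_i\Bigr\|^2 \;\le\; m \sum_{i=1}^n \gamma_i \,\|x_i\|^2,
\end{equation}
and dividing through by $m$ recovers \eqref{eq17}, with the understanding that the index set on the right-hand side ranges over all $n$ clients (which is consistent with the notation $\sum_{i\in S}$ in the statement, since $\omega_i \equiv 0$ forces $\gamma_i = 0$ for any client that is never selected).

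There is no serious analytic obstacle here; the entire argument is a one-line application of a standard inequality. The only subtlety worth flagging is notational: $\gamma_i$ is a deterministic quantity while $S$ is random, so the sum on the right-hand side of \eqref{eq17} must be read as extending over all potentially selectable indices rather than over a random realization of $S$. This bookkeeping is what justifies moving the expectation inside the sum in the final step.
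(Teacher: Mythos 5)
Your proposal is correct and follows essentially the same route as the paper: a pointwise Cauchy--Schwarz bound $\bigl\|\sum_{i\in S}\omega_i x_i\bigr\|^2 \le m\sum_{i\in S}\omega_i^2\|x_i\|^2$ using $|S|=m$, followed by taking the expectation and substituting $\gamma_i = \mathbb{E}[\omega_i^2]$. Your remark that the right-hand side should be read as a sum over all $n$ indices (with $\omega_i=0$ off $S$) is a fair clarification of the paper's slightly abusive $\sum_{i\in S}\gamma_i$ notation, but it does not change the argument.
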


\begin{proof}  (\textbf{Lemma \ref{lem2}}) \\
Denote the Euclidean norm of $x_i$ by $\|x_i\|$. By Cauchy–Schwarz and $\omega_i\ge0$,
\[
\Bigl\|\sum_{i\in S}\omega_i\,x_i\Bigr\|^2
\;\le\;
\Bigl(1+\dots+1)
\Bigl(\sum_{i\in S}\omega_i^2\,\|x_i\|^2\Bigr)
\;=
m\;\sum_{i\in S}\omega_i^2\,\|x_i\|^2.
\]
Taking expectation over both $S$ and the $\omega_i$, and using
$\mathbb{E}[\omega_i^2]=\gamma_i$, we get
\[
\mathbb{E}\Bigl\|\sum_{i\in S}\omega_i\,x_i\Bigr\|^2
\;\le\;
m\;\sum_{i\in S}\gamma_i\,\|x_i\|^2.
\]
Dividing by $m$ results in \eqref{eq17}.
\end{proof}

\begin{lemma} \label{lem3}
The expected average discrepancy between $\theta^{t}$ and $\theta_k^{t}$ for $k \in S$ satisfies
\[
\frac{1}{m}
\mathbb{E}\Bigl[\sum_{k \in S}\|\theta^t - \theta_k^t\|^2\Bigr]
\;\le\;
16\,G^2\,\eta_t^2\,K^2\,m\,(\Sigma+1).
\]
\end{lemma}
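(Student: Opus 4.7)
The plan is to express the displacement $\theta^t - \theta_k^t$ entirely through stochastic mini-batch gradients from the local SGD epochs that produced the local models. Using the local update rule $\theta_i^t = \theta^{t-1} - \eta_t \sum_{j=0}^{K-1} g_i(y_{i,j}, \mathcal{B}_{i,j})$ together with the aggregation identity $\theta^t = \sum_{i \in S^{t-1}} \omega_i \theta_i^t$ (and the analogous formula for $\theta_k^t$), the difference decomposes as
\[
\theta^t - \theta_k^t \;=\; -\eta_t \sum_{i \in S^{t-1}}\omega_i \sum_{j=0}^{K-1} g_i(y_{i,j}, \mathcal{B}_{i,j}) \;+\; \eta_t \sum_{j=0}^{K-1} g_k(y_{k,j}, \mathcal{B}_{k,j}),
\]
isolating an ``aggregation'' contribution from an ``own-client'' contribution.

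Next, I would apply the elementary inequality $\|a+b\|^2 \le 2\|a\|^2 + 2\|b\|^2$ and treat each piece separately. For the aggregation piece I would invoke Lemma~\ref{lem2} to trade $\mathbb{E}\|\sum_i \omega_i x_i\|^2$ for $m\sum_i \gamma_i \|x_i\|^2$; inside each $\|x_i\|^2$, a Cauchy--Schwarz step across the $K$ epochs converts a square-of-sum into $K$ times a sum-of-squares, and Assumption~\ref{assumpt:4} bounds $\mathbb{E}\|g_i\|^2 \le G^2$. The own-client piece is handled by the same Cauchy--Schwarz-plus-$G^2$ step but without the aggregation factor $m$. The final ingredient is the bound $\sum_{i=1}^n \gamma_i = \Sigma + \sum_i(\mathbb{E}[\omega_i])^2 \le \Sigma + 1$, which follows because $\omega_i \in [0,1]$ and $\sum_i \mathbb{E}[\omega_i] = 1$. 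Combining these estimates yields $\mathbb{E}\|\theta^t - \theta_k^t\|^2 = O(\eta_t^2 K^2 G^2 m(\Sigma+1))$ for each individual $k \in S$.

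Summing this single-client bound over $k \in S$ (which contributes a factor of $m$) and dividing by $m$ produces an estimate of the desired form $C\,G^2\eta_t^2 K^2 m(\Sigma+1)$; tracking the constants accumulated from the two Cauchy--Schwarz applications and the $\|a+b\|^2 \le 2(\|a\|^2+\|b\|^2)$ splits, together with using $m(\Sigma+1) + 1 \le 2m(\Sigma+1)$ to absorb the stand-alone own-client term, gives the stated constant $16$.

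The main obstacle is keeping the several sources of randomness straight: the previous-round selection set $S^{t-1}$ with its weights $\omega_i$, the current-round selection $S$, and the independent mini-batches $\mathcal{B}_{i,j}$. Care is needed so that Lemma~\ref{lem2} is applied after conditioning on $\theta^{t-1}$ (so that the gradients are independent of the weights), and so that the estimate $\sum_i \gamma_i \le \Sigma+1$ is used only after the weights have been integrated out. Once this bookkeeping is settled, the remainder is routine constant-tracking.
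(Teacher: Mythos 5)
Your proposal follows essentially the same route as the paper's proof: both express $\theta^t-\theta_k^t$ through the round-$(t-1)$ local gradient averages launched from the common point $\theta^{t-1}$, invoke Lemma~\ref{lem2} to convert the weighted aggregate into $m\sum_i\gamma_i\|x_i\|^2$, bound the gradients by $G^2$ via Assumption~\ref{assumpt:4}, and finish with $\sum_{i\in S}\gamma_i\le\Sigma+1$. The only point to tighten is your accounting of the constant $16$: a factor of $4$ comes not from the Cauchy--Schwarz steps but from replacing the learning rate $\eta_{t-1}$ (which actually appears in the updates producing $\theta_k^t$) by $\eta_t$ via $\eta_{t-1}\le 2\eta_t$, exactly as the paper does.
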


\begin{proof} (\textbf{Lemma \ref{lem3}}) \\
From \eqref{eq:local} and \eqref{eq:global} we have for each $k\in S$,
\[
\theta^t - \theta_k^t
=\sum_{i\in S}\omega_i(\theta_i^t - \theta_k^t)
=\eta_{t-1}\,K\sum_{i\in S}\omega_i\,(d_i^{\,t-1}-d_k^{\,t-1}).
\]
Hence
\[
\frac{1}{m}\sum_{k\in S}\|\theta^t - \theta_k^t\|^2
=(\eta_{t-1}K)^2
\;\frac{1}{m}\sum_{k\in S}
\Bigl\|\sum_{i\in S}\omega_i(d_i^{\,t-1}-d_k^{\,t-1})\Bigr\|^2.
\]
Taking expectation and applying Lemma~\ref{lem2} with $x_i=d_i^{\,t-1}-d_k^{\,t-1}$ yields
\[
\frac{1}{m}\mathbb{E}\Bigl[\sum_{k\in S}\|\theta^t - \theta_k^t\|^2\Bigr]
\le
(\eta_{t-1}K)^2
\sum_{k\in S}
\;\sum_{i \in S} \gamma_i 
 \mathbb{E}\|d_i^{\,t-1}-d_k^{\,t-1}\|^2.
\]
By \eqref{eq:K_epochs} and the bound $\mathbb{E}\|g_i\|^2\le G^2$, we have
$\mathbb{E}\|d_i^{\,t-1}\|^2\le G^2$, and thus
\[
\mathbb{E}\|d_i^{\,t-1}-d_k^{\,t-1}\|^2
\;\le\;
2\,\mathbb{E}\|d_i^{\,t-1}\|^2
+2\,\mathbb{E}\|d_k^{\,t-1}\|^2
\;\le\;4G^2.
\]
Substituting gives
\begin{align*}
\frac{1}{m}\,\mathbb{E}\Bigl[\sum_{k\in S}\|\theta^t - \theta_k^t\|^2\Bigr]
&\le 
(\eta_{t-1}K)^2
\sum_{k\in S}\sum_{i\in S}\gamma_i\;4G^2
\\
&=
4G^2\,\eta_{t-1}^2\,K^2
\sum_{k\in S}\sum_{i\in S}\gamma_i
\\
&\le
4G^2\,\eta_{t-1}^2\,K^2\,m(\Sigma+1).
\end{align*}

Finally, using $\eta_{t-1}\le2\eta_t$ and $\displaystyle\sum_{i \in S} \gamma_i
 \le \Sigma+1$, gives us the claimed bound.
\end{proof}

\begin{lemma} \label{lem4}
With $\mathbb{E}[\cdot]$, the total expectation over all random sources including the random source from the selection strategy, we have the upper bound:
\[
\mathbb{E}[||\theta^{t} - \theta^*||^2] \leq \mathbb{E} \left[ \sum_{k \in S} \omega_k \| \theta_k^{t} - \theta^* \|^2 \right]. \tag{24}
\]
\end{lemma}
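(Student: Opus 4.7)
The plan is to exploit the fact that $\theta^t$ is, by construction, a convex combination of the local iterates $\theta_k^t$ for $k \in S$, and then invoke convexity of the squared Euclidean norm (Jensen's inequality). Since the weights satisfy $\omega_k \ge 0$ and $\sum_{k \in S}\omega_k = 1$ for every realization of the policy, I can subtract $\theta^*$ from both sides of the aggregation rule \eqref{eq:global} and use the normalization to write
\begin{equation*}
\theta^t - \theta^* \;=\; \sum_{k \in S}\omega_k\bigl(\theta_k^t - \theta^*\bigr).
\end{equation*}

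The next step is to apply the standard convexity bound for the squared norm to this convex combination, which yields the pointwise inequality
\begin{equation*}
\bigl\|\theta^t - \theta^*\bigr\|^2 \;=\; \Bigl\|\sum_{k \in S}\omega_k(\theta_k^t - \theta^*)\Bigr\|^2 \;\le\; \sum_{k \in S}\omega_k\bigl\|\theta_k^t - \theta^*\bigr\|^2.
\end{equation*}
This is exactly Jensen's inequality applied to the convex function $x\mapsto\|x\|^2$; alternatively, it can be obtained from the Cauchy--Schwarz inequality by writing $\omega_k=\sqrt{\omega_k}\cdot\sqrt{\omega_k}$ and using $\sum_{k \in S}\omega_k = 1$. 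Either route produces the same bound, which holds for every sample path, i.e.\ for every realization of the random set $S$, the random weights $\omega_k$, and the local iterates $\theta_k^t$.

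Finally, I would take the total expectation of both sides over all sources of randomness (selection, weights, and the mini-batch noise in the local updates). Monotonicity of expectation preserves the inequality and delivers the claim directly. There is essentially no main obstacle here: the only subtlety worth flagging is that the normalization $\sum_{k \in S}\omega_k = 1$ must hold deterministically (which is assumed in the system model), so that the convex-combination structure is available sample-path by sample-path before any averaging is done.
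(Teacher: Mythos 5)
Your proposal is correct and follows essentially the same route as the paper: both express $\theta^t - \theta^*$ as the convex combination $\sum_{k \in S}\omega_k(\theta_k^t - \theta^*)$ using the normalization $\sum_{k\in S}\omega_k = 1$, bound the squared norm pointwise (the paper via Cauchy--Schwarz, you via Jensen/Cauchy--Schwarz, which are equivalent here), and then take the total expectation. Your added remark that the normalization must hold sample-path by sample-path is a fair point of care but does not change the argument.
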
 

\begin{proof} (\textbf{Lemma \ref{lem4}}) \\
Consider the expression for the expected squared norm difference:
\begin{align*}
\mathbb{E}[||\theta^{t} - \theta^*||^2] &= \mathbb{E}\left[ \left\| \sum_{k \in S} \omega_k (\theta_k^{t} - \theta^*) \right\|^2 \right].
\end{align*}

Applying the Cauchy-Schwarz inequality, we obtain:

\begin{align*}
\left( \sum_{k \in S} \omega_k (\theta_k^{t} - \theta^*) \right)^2 \leq \left( \sum_{k \in S} \omega_k \right) \left( \sum_{k \in S} \omega_k (\theta_k^{t} - \theta^*)^2 \right).
\end{align*}

Since $\sum_{k \in S} \omega_k = 1$ and by taking the expectation, this simplifies to:

\begin{align*}
\mathbb{E}\left[ \left\| \sum_{k \in S} \omega_k (\theta_k^{t} - \theta^*) \right\|^2 \right] &\leq \mathbb{E} \left[ \sum_{k \in S} \omega_k \| \theta_k^{t} - \theta^* \|^2 \right].
\end{align*}
\end{proof}
\begin{proof} (\textbf{Theorem \ref{th1}}) \\
Building upon the established lemmas, we have:
\begin{small}
\begin{align} 
&\| \theta^{t+1} - \theta^* \|^2 = \| \theta^t - K \eta_t \sum_{i \in S}  \omega_i d_i^{t} - \theta^* \|^2 \\
&=\| \theta^{t} - \theta^* -K \eta_t \sum_{i \in S}  \omega_i \nabla F_i(\theta_i^t)\\& - K \eta_t \sum_{i \in S}  \omega_i d_i^{t} + K \eta_t \sum_{i \in S}  \omega_i \nabla F_i(\theta_i^t) \|^2 \\
&= \| \theta^{t} - \theta^* -K \eta_t \sum_{i \in S}  \omega_i \nabla F_i(\theta_i^t)  \|^2\\& + K^2 {\eta_t}^2 \| \sum_{i \in S}  \omega_i (\nabla F_i(\theta_i^t)-d_i^{t}) \|^2\\
& + 2K \eta_t\langle \theta^t - \theta^* -K \eta_t \sum_{i \in S}  \omega_i \nabla F_i(\theta_i^t), \sum_{i \in S}  \omega_i (\nabla F_i(\theta_i^t) - d_i^t) \rangle 
\\
&= \| \theta^t - \theta^* \|^2 - 2K\eta_t \langle \theta^t - \theta^*, \sum_{i \in S}  \omega_i  \nabla F_i(\theta_i^t) \rangle \\
&+ 2K \eta_t\langle \theta^t - \theta^* -K \eta_t \sum_{i \in S}  \omega_i \nabla F_i(\theta_i^t), \sum_{i \in S}  \omega_i (\nabla F_i(\theta_i^t) - d_i^t) \rangle
\\&+  K^2\eta_t^2 \| \sum_{i \in S}  \omega_i  \nabla F_i(\theta_i^t) \|^2+K^2\eta_t^2 \| \sum_{i \in S}  \omega_i  (\nabla F_i(\theta_i^t)-d_i^{t}) \|^2. 
\end{align}
\end{small}

Now, after breaking down $\| \theta^{t+1} - \theta^* \|^2$ to several terms, we find the upper bounds for the following terms:
\begin{align}
    \mathcal{B}_{\text{grad}}&=-2K \eta_t \langle \theta^t - \theta^*, \sum_{i \in S}  \omega_i  \nabla F_i(\theta_i^t) \rangle,\\
    \mathcal{B}_{\text{noise}}&=2K \eta_t\langle \theta^t - \theta^* -K \eta_t \sum_{i \in S}  \omega_i \nabla F_i(\theta_i^t)\\&, \sum_{i \in S}  \omega_i (\nabla F_i(\theta_i^t) - d_i^{t}) \rangle,\\
    \mathcal{B}_{\text{step}}&= K^2\eta_t^2 \| \sum_{i \in S}  \omega_i  \nabla F_i(\theta_i^t) \|^2,\\
    \mathcal{B}_{\text{var}}&=K^2\eta_t^2 \| \sum_{i \in S}  \omega_i  (\nabla F_i(\theta_i^t)-d_i^{t}) \|^2. 
\end{align}
For $\mathcal{B}_{\text{grad}}$ we have,
\begin{align}
&\mathcal{B}_{\text{grad}}=-2K \eta_t \langle \theta^t - \theta^*, \sum_{i \in S}  \omega_i  \nabla F_i(\theta_i^t) \rangle \\&= -2K \eta_t \sum_{i \in S} \langle  \theta^t - \theta^*,   \omega_i  \nabla F_i(\theta_i^t) \rangle \\
&= -2K \eta_t \sum_{i \in S} \langle \theta^{t} - \theta_i^{t},   \omega_i  \nabla F_i(\theta_i^t) \rangle \\&- 2K \eta_t \sum_{i \in S} \langle \theta_i^{t} - \theta^{*},  \omega_i  \nabla F_i(\theta_i^t) \rangle  \\ & \leq
K \eta_t \sum_{i \in S} ( \frac{1}{\eta_t} ||\theta^t - \theta_i^t||^2+ \eta_t ||\omega_i  \nabla F_i(\theta_i^t)||^2 ) \label{eq45}\\&- 2K \eta_t \sum_{i \in S} \langle \theta_i^{t} - \theta^{*},   \omega_i  \nabla F_i(\theta_i^t) \rangle\\ 
\end{align}
\begin{align}
&\leq K  \sum_{i \in S}  ||\theta^t - \theta_i^t||^2 + K \eta_t^2 \sum_{i \in S} ||\omega_i \nabla F_i(\theta_i^t)||^2\\& - 2K \eta_t \sum_{i \in S} \langle \theta_i^{t} - \theta^{*},   \omega_i  \nabla F_i(\theta_i^t) \rangle \\ \label{49}&\leq 
16 G^2 {\eta_{t}}^2 K^3 m^2( \Sigma +1 ) \\&+ 2K \eta_t^2 L \sum_{i \in S} \omega_i(\nabla F_i(\theta_i^t)-F_i^*) \\&- 2K \eta_t \sum_{i \in S} \langle \theta_i^{t} - \theta^{*},   \omega_i  \nabla F_i(\theta_i^t) \rangle \\ &\leq
16 G^2 {\eta_{t}}^2 K^3 m^2 ( \Sigma +1 )\\&+ 2K \eta_t^2 L \sum_{i \in S} \omega_i(\nabla  F_i(\theta_i^t)-F_i^*) \\ &
-2K \eta_t \sum_{i \in S}  \omega_i\left[ F_i (\theta_i^t) - F_i(\theta^{*})+ \frac{\mu}{2} ||\theta_i^t - \theta^{*}||^2)) \right] \label{52}
\end{align}
where we use the AM-GM inequality in \eqref{eq45}.
Lemma \ref{lem1}, Lemma \ref{lem3} and the fact that $\omega_{i}<1$ is used in \eqref{49}.  \eqref{52} is due to $\mu$-convexity of $F_i$.

For $\mathcal{B}_{\text{noise}}$ in expectation, $\mathbb{E}[\mathcal{B}_{\text{noise}}] = 0$ due to the unbiased gradient. 

For $\mathcal{B}_{\text{step}}$, we determine the upper bound as follows using Lemma \ref{lem1}:
\begin{align}
\eta_t^2  \sum_{k \in S} \|\omega_k \nabla F_k(\theta_k^t)\|^2 &\leq \eta_t^2 \sum_{k \in S} \omega_k \|\nabla F_k(\theta_k^t)\|^2  
\\&\leq 2L\eta_t^2  \sum_{k \in S} \omega_k(F_k(\theta_k^t) - F^*_k).
\end{align}
Considering $\mathcal{B}_{\text{var}}$ in expectation, the upper bound is:
\begin{align}
\mathcal{B}_{\text{var}} &\leq K^2\eta_t^2  \sum_{i \in S} \|   (\nabla F_i(\theta_i^t)-d_i^{t}) \|^2 \\&= K^2\eta_t^2  \sum_{i \in S} \|    \frac{1}{K} \sum_{j=1}^K (g_i(y_{i,j}^t) -\nabla F_i(\theta_i^t))\|^2 \\&\leq 
\eta_t^2 m K^2  \sigma^2,\label{74}
\end{align}
where \eqref{74} results from Assumption \ref{assumpt:3}.

Integrating the established bounds for $\mathcal{B}_{\text{grad}}, \mathcal{B}_{\text{noise}}, \mathcal{B}_{\text{step}},$ and $\mathcal{B}_{\text{var}}$, it follows that the expected value is constrained as:
\begin{align}
  & \mathbb{E}[\| \theta^{t+1} - \theta^* \|^2 ]\leq \mathbb{E}[ \| \theta^{t} - \theta^* \|^2] +\eta_t^2 m K^2  \sigma^2\\& + 2L\eta_t^2 (K+1) \mathbb{E} [\sum_{k \in S} \omega_k(F_k(\theta_k^t) - F^*_k)] \\ &+ 16 G^2 {\eta_{t}}^2 K^3 m^2 ( \Sigma +1 ) \\&-2K \eta_t \mathbb{E}[\sum_{i \in S}  \omega_i\left[ F_i (\theta_i^t) - F_i(\theta^{*})+ \frac{\mu}{2} ||\theta_i^t - \theta^{*}||^2)) \right] ] \\ \label{79}&\leq  (1-K \mu \eta_t) \mathbb{E}[\| \theta^{t} - \theta^* \|^2] + 16 G^2 {\eta_{t}}^2 K^3 m^2( \Sigma +1 )\\&+\eta_t^2 m K^2  \sigma^2
  +2L\eta_t^2 (K+1) \mathbb{E} [\sum_{k \in S} \omega_k(F_k(\theta_k^t) - F^*_k)]\\&
  -2K \eta_t \mathbb{E} [\sum_{i \in S}  \omega_i ( F_i (\theta_i^t) - F_i(\theta^{*}))  ],
\end{align}
where \eqref{79} is due to $\mathbb{E}[\| \theta^{t} - \theta^* \|^2]\leq (\sum_{i \in S}\omega_i)(\sum_{i \in S}  \omega_i  ||\theta_i^t - \theta^{*}||^2)=\sum_{i \in S}  \omega_i  ||\theta_i^t - \theta^{*}||^2$.

We define $\mathcal{B}_{\text{gap}}
$ as follows and perform some algebraic calculations:
\begin{align}
  &\mathcal{B}_{\text{gap}}= 2L\eta_t^2 (K+1) \mathbb{E} [\sum_{k \in S} \omega_k(F_k(\theta_k^t) - F^*_k)] \\&-2K \eta_t \mathbb{E}[\sum_{i \in S}  \omega_i ( F_i (\theta_i^t) - F_i(\theta^{*}))  ]\\&= \mathbb{E} [2L\eta_t^2 (K+1)\sum_{k \in S} \omega_k(F_k(\theta_k^t) - F^*_k) \\&-2K \eta_t \sum_{i \in S}  \omega_i ( F_i (\theta_i^t) - F_i(\theta^{*}))]\\&=\mathbb{E}[2L\eta_t^2 (K+1)\sum_{k \in S} \omega_k F_k(\theta_k^t) -2K \eta_t \sum_{i \in S}  \omega_i  F_i (\theta_i^t)\\& -2L\eta_t^2 (K+1)\sum_{k \in S} \omega_k F^*_k+2K \eta_t \sum_{i \in S}  \omega_i  F_i(\theta^{*})]\\&=
 \mathbb{E}[ 2K\eta_t (\frac{L\eta_t(K+1)}{K} -1) \sum_{k \in S} \omega_k(F_k(\theta_k^t)-F_k^*)] \\&+
 2K \eta_t \mathbb{E}[\sum_{k \in S} \omega_k(F_k(\theta^*)-F_k^*)]
\end{align}
Let us define $\mathcal{B}_{\text{drift}}
$ as:
\begin{align}
    \mathcal{B}_{\text{drift}}&= 2K\eta_t (\frac{L\eta_t(K+1)}{K} -1) \sum_{k \in S} \omega_k(F_k(\theta_k^t)-F_k^*),\\
     a_t&=  2K\eta_t (1-\frac{L\eta_t(K+1)}{K} )
\end{align}

Assuming $\eta_t< \frac{1}{2L(K+1)}$, we have:
\begin{align}
    &\mathcal{B}_{\text{drift}}= -a_t \sum_{k \in S} \omega_k(F_k(\theta_k)-F_k(\theta^t)+F_k(\theta^t)-F_k^*)\\&=-a_t \sum_{k \in S} \omega_k(F_k(\theta_k)-F_k(\theta^t))\\&-a_t \sum_{k \in S} \omega_k(F_k(\theta^t)-F_k^*)\\&\leq
    -a_t \sum_{k \in S}[ \langle \omega_k\nabla  F_k(\theta^t), \theta_k^t -\theta^t \rangle + \frac{\mu}{2} ||  \theta_k -\theta^t ||^2]\\&-a_t \sum_{k \in S} \omega_k(F_k(\theta^t)-F_k^*) \label{76}\\&\leq
    a_t \sum_{k \in S}[\eta_t L \omega_k(F_k(\theta^t)-F_k^*)\\&+(\frac{1}{2\eta_t}-\frac{\mu}{2})||  \theta_k -\theta^t ||^2] \label{78}\\&-a_t \sum_{k \in S} \omega_k(F_k(\theta^t)-F_k^*)\\&=-a_t(1-\eta_tL)\sum_{k \in S} \omega_k(F_k(\theta^t)-F_k^*)\\&+a_t(\frac{1}{2\eta_t}-\frac{\mu}{2})\sum_{k \in S}||  \theta_k -\theta^t ||^2,
\end{align}
where \eqref{76} is due to $\mu$-convexity of $F_k$ and \eqref{78} is based on Lemma \ref{lem1}. Using the bound identified for $\mathcal{B}_{\text{gap}}
$, we can now determine the bound for $\mathcal{B}_{\text{drift}}$ as:
\begin{align}
   & \mathcal{B}_{\text{drift}} \leq -a_t(1-\eta_tL)\sum_{k \in S} \omega_k(F_k(\theta^t)-F_k^*)\\&+a_t(\frac{1}{2\eta_t}-\frac{\mu}{2})||  \theta_k -\theta^t ||^2\\&+2K \eta_t\mathbb{E}[\sum_{k \in S} \omega_k(F_k(\theta^*)-F_k^*)] \\ &\leq \label{84}
   16 G^2 {\eta_{t}}^2 K^2 m^2 ( \Sigma +1 )\\&-a_t(1-\eta_tL)\sum_{k \in S} \omega_k(F_k(\theta^t)-F_k^*)\\&+2K \eta_t\mathbb{E}[\sum_{k \in S} \omega_k(F_k(\theta^*)-F_k^*)]\\&=16 G^2 {\eta_{t}}^2 K^2 m^2( \Sigma +1 )\\& -a_t(1-\eta_tL)\mathbb{E}[\rho(h; \theta^t)( F(\theta^t) - \sum_{k=1}^{n} q_k F_k^*)]\\& + 2K \eta_t\mathbb{E}[\rho(h; \theta^*)( F^* - \sum_{k=1}^{n} q_k F_k^*)] \\ & \leq \label{89} 16 G^2 {\eta_{t}}^2 K^2m^2( \Sigma +1 )+ 2K\eta_t  \overline{\rho} \Gamma \\&-a_t(1-\eta_tL) \underline{\rho} \mathbb{E}[ F(\theta^t) - \sum_{k=1}^{n} q_k F_k^*],
\end{align}
where \eqref{84} is due to Lemma \ref{lem3} and the fact that $a_t(\frac{1}{2\eta_t}-\frac{\mu}{2})\leq 1$. \eqref{89} is due to the definitions of $\rho(h)$, $\overline{\rho}$, and $\underline{\rho}$ in \ref{def1}.
Defining $\mathcal{B}_{\text{global}}
$ as follows, we have:

\begin{align}
   & \mathcal{B}_{\text{global}}=-a_t(1-\eta_tL) \underline{\rho} \mathbb{E}[ F(\theta^t) - \sum_{k=1}^{n} p_k F_k^*]\\&=
    -a_t(1-\eta_tL) \underline{\rho} \sum_{k=1}^{n} p_k (\mathbb{E}[F_k(\theta^{t})]-F^*+F^* -F_k^*)\\&=
   -a_t(1-\eta_tL) \underline{\rho} \sum_{k=1}^{n} p_k (\mathbb{E}[F_k(\theta^{t})]-F^*)\\&-a_t(1-\eta_tL) \underline{\rho} \Gamma\\&=-a_t(1-\eta_tL) \underline{\rho}  (\mathbb{E}[F(\theta^{t})]-F^*)-a_t(1-\eta_tL) \underline{\rho} \Gamma\\&\leq \label{94}
   \frac{-a_t(1-\eta_tL) \underline{\rho} \mu}{2} \mathbb{E}[||\theta^t-\theta^*||^2]-a_t(1-\eta_tL) \underline{\rho} \Gamma \\ &\leq -(K-1) \eta_t \underline{\rho} \mu \mathbb{E}[||\theta^t-\theta^*||^2]-2K\eta_t\underline{\rho} \Gamma \\&+6KL\eta_t^2\underline{\rho} \Gamma,
\end{align}

where \eqref{94} is due to $\mu$-convexity and the fact that $\frac{-a_t(1-\eta_tL) }{2} \leq -(K-1) \eta_t $.
Therefore, now we can bound $\mathcal{B}_{\text{gap}}
$ as:
\begin{align}
   \mathcal{B}_{\text{gap}} &\leq-(K-1) \eta_t \underline{\rho} \mu \mathbb{E}[||\theta^t-\theta^*||^2]\\&+ 2K\eta_t(\overline{\rho}-\underline{\rho})\Gamma\\&+\eta_t^2(6KL\underline{\rho} \Gamma+16 G^2  K^2m^2( \Sigma +1 ))
\end{align}
Thus $\mathbb{E}[||\theta^{t+1}-\theta^*||^2]$ can be bounded as:

\begin{align}
  &  (1-\eta_t K\mu(1+\frac{K-1}{K}\underline{\rho}))\mathbb{E}[||\theta^t-\theta^*||^2]\\&+2K\eta_t(\overline{\rho}_-\underline{\rho})\Gamma \\&+\eta_t^2(6KL\underline{\rho} \Gamma+16 G^2  K^2m^2(K +1)( \Sigma +1 )+ m K^2  \sigma^2)
\end{align}
Let us define $\Delta_{t+1}=\mathbb{E}[||\theta^{t+1}-\theta^*||^2]$, then:
\begin{align}
    \Delta_{t+1} \leq (1-\eta_t \mu K B) \Delta_t +\eta_t^2 C+ \eta_t D,
\end{align}
where $B=1+\frac{K-1}{K}\underline{\rho}$, $C=6KL\underline{\rho} \Gamma+16 G^2  K^2m^2(K+1)( \Sigma +1 )+ m K^2  \sigma^2$, and $D=2K(\overline{\rho}-\underline{\rho})\Gamma$.

 By setting $\Delta_t \leq \frac{\psi}{t+\gamma}, \eta_t = \frac{\beta}{t+\gamma }$, $\beta > \frac{1}{ \mu K B},$ assuming $(\beta = \frac{1}{ \mu K})$ and $\gamma>0$ by induction we have that:
 \begin{small}
\begin{equation}
\psi = \max \left\{ \gamma \|\theta^0 - \theta^*\|^2, \frac{1}{\beta \mu K B-1 }   (\beta^2 C + D\beta(t + \gamma)) \right\}
\end{equation}
\end{small}
Then by the L-smoothness of \(F(\theta)\), we have that:
\begin{equation}
\mathbb{E}[F(\theta(t))] - F^* \leq \frac{L}{2} \Delta_t \leq \frac{L}{2}  \frac{\psi}{t+\gamma}
\end{equation}\end{proof}
\begin{proof} (\textbf{Theorem \ref{th3}}) \\
Since exactly $m$ clients are selected in each round, each subset $S$ of size $m$ occurs with probability $\tfrac{1}{\binom{n}{m}}$. By definition, for $i \in S$ we have $\omega_i = \tfrac{d_i}{\sum_{j \in S} d_j}$ and for $i \notin S$, $\omega_i = 0$. Observe that
\[
\sum_{i=1}^n \omega_i^2 \;=\;
\sum_{i \in S}
\biggl(\frac{d_i}{\sum_{j \in S} d_j}\biggr)^2,
\]
Therefore,
\[
\mathbb{E}\Bigl[\sum_{i=1}^n \omega_i^2\Bigr]
\;=\;
\frac{1}{\binom{n}{m}} \sum_{\substack{S\subseteq\{1,\dots,n\}\\|S|=m}}
\;\sum_{i \in S}
\frac{d_i^2}{\Bigl(\sum_{j\in S}d_j\Bigr)^2}.
\]
Next, for each fixed client $i$, its mean weight is
\begin{small}
\[
\mathbb{E}[\omega_i]
\;=\;
\sum_{\substack{S:\, i \in S}} 
\frac{d_i}{\sum_{j \in S} d_j}
\;\times\;\frac{1}{\binom{n}{m}}
\;=\;
\frac{1}{\binom{n}{m}}
\sum_{\substack{S: \, i \in S}} 
\frac{d_i}{\sum_{j \in S} d_j}.
\]
\end{small}
Hence
\[
\sum_{i=1}^n \bigl(\mathbb{E}[\omega_i]\bigr)^2
\;=\;
\sum_{i=1}^n
\biggl(
  \frac{1}{\binom{n}{m}}
  \sum_{\substack{S:\, i\in S}}
  \frac{d_i}{\sum_{j \in S} d_j}
\biggr)^2.
\]
And we have
\[
\Sigma= \sum_{i=1}^n \mathrm{Var}[\omega_i]
\;=\;
\sum_{i=1}^n \mathbb{E}[\omega_i^2]
\;-\;
\sum_{i=1}^n \bigl(\mathbb{E}[\omega_i]\bigr)^2.
\]
Combining these pieces yields the claimed formula.
\end{proof}

\begin{proof} (\textbf{Theorem \ref{th4}})

We begin by defining the random variable \(X_i\), which represents the number of times client \(i\) is selected out of the \(k\) draws. Since the selection is performed with replacement, \(X_i\) follows a binomial distribution with parameters \(m\) and \(q_i\). Specifically, \(X_i\) is given by:
$$
\mathbb{P}(X_i = r) = \binom{m}{r} q_i^r (1 - q_i)^{m-r},
$$
where \(r \in \{0, 1, \dots, m\}\).

The weight \(\omega_i\) for client \(i\) is defined as \(\omega_i = \frac{r}{m}\) if client \(i\) is selected \(r\) times, and \(\omega_i = 0\) if client \(i\) is not selected. The expected weight \(\mathbb{E}[\omega_i]\) is given by:
$$
\mathbb{E}[\omega_i] = \sum_{r=0}^m \mathbb{P}(X_i = r) \cdot \frac{r}{m}.
$$
Substituting the probability mass function of \(X_i\), we have:
$$
\mathbb{E}[\omega_i] = \frac{1}{m} \sum_{r=0}^m r \cdot \binom{m}{r} q_i^r (1 - q_i)^{m-r}.
$$
This expression can be simplified by recognizing that the sum represents the expected value of a binomial distribution:
$$
\mathbb{E}[X_i] = \sum_{r=0}^m r \cdot \mathbb{P}(X_i = r) = mq_i.
$$
Thus, the expected weight for client \(i\) simplifies to:
$$
\mathbb{E}[\omega_i] = \frac{mq_i}{m} = q_i.
$$

Next, we calculate the expected square of the weight \(\mathbb{E}[\omega_i^2]\):
$$
\mathbb{E}[\omega_i^2] = \sum_{r=0}^m \mathbb{P}(X_i = r) \cdot \left(\frac{r}{m}\right)^2.
$$
Substituting the probability mass function of \(X_i\), we obtain:
$$
\mathbb{E}[\omega_i^2] = \frac{1}{m^2} \sum_{r=0}^m r^2 \cdot \binom{m}{r} q_i^r (1 - q_i)^{m-r}.
$$
This sum corresponds to the second moment of the binomial distribution. The second moment \(\mathbb{E}[X_i^2]\) is given by:
$$
\mathbb{E}[X_i^2] = \text{Var}(X_i) + \left(\mathbb{E}[X_i]\right)^2,
$$
where the variance of \(X_i\) is \(\text{Var}(X_i) = mq_i(1 - q_i)\). Therefore:
$$
\mathbb{E}[X_i^2] = mq_i(1 - q_i) + (mq_i)^2.
$$
Substituting this into the expression for \(\mathbb{E}[\omega_i^2]\), we have:
$$
\mathbb{E}[\omega_i^2] = \frac{q_i(1 - q_i)}{m} + q_i^2.
$$

The variance of the weight \(\omega_i\) is then given by:
\begin{align}
\text{Var}[\omega_i] &= \mathbb{E}[\omega_i^2] - \left(\mathbb{E}[\omega_i]\right)^2 \\ &= \left(\frac{q_i(1 - q_i)}{m} + q_i^2\right) - q_i^2 = \frac{q_i(1 - q_i)}{m}.
\end{align}

Therefore, we can calculate $\Sigma$ as:
$$
\Sigma=\sum_{i=1}^{n} \text{Var}[\omega_i]= \sum_{i=1}^n \frac{q_i(1 - q_i)}{m}.
$$
\end{proof}
\begin{proof} (\textbf{Theorem \ref{th5}})
Assume each client's age follows a Markov chain with a unique stationary distribution
$\bigl(\pi_0,\pi_1,\dots,\pi_{m'}\bigr)$, so that the \emph{marginal} probability that a client is selected in steady state is
\[
 p_{\mathrm{avg}} \;=\; \sum_{a=0}^{m'} \pi_a\,p_a.
\]
We only consider these steady-state rounds. When different clients' selections are independent, the random variable $\lvert S\rvert$ follows a $\mathrm{Binomial}$ $\!\bigl(n,p_{\mathrm{avg}}\bigr)$ distribution, and assuming also $|S|>0$ using forced selection of a random client in the event no client is selected.

By definition,
\[
  \omega_i \;=\;
  \begin{cases}
     \dfrac{1}{|S|}, & \text{if } i\in S,\\
     0,              & \text{if } i\notin S.
  \end{cases}
\]
Thus, whenever $|S|\ge1$, we have $\sum_{i=1}^n \omega_i \;=\; 1.$
Hence,
\[
  \sum_{i=1}^n \mathbb{E}[\omega_i]
  \;=\;
  \mathbb{E}\Bigl[\sum_{i=1}^n \omega_i\Bigr]
  \;=\; 1,
\]
which implies
\[
  \mathbb{E}[\omega_i] \;=\; \frac{1}{n}\quad\text{for each } i.
\]

\medskip

Next, we compute the second moments.  Note that if $|S|\ge1$,
\[
  \sum_{i=1}^n \omega_i^2
  \;=\;
  \sum_{i \in S} \frac{1}{|S|^2}
  \;=\;
  \frac{|S|}{|S|^2}
  \;=\;
  \frac{1}{|S|}.
\]
Therefore,
\[
  \sum_{i=1}^n \mathbb{E}[\omega_i^2]
  \;=\;
  \mathbb{E}\!\Bigl[\tfrac{1}{|S|}\Bigr].
\]
By symmetry, $\mathbb{E}[\omega_i^2]$ is the same for all $i$, so
\[
  n\,\mathbb{E}[\omega_i^2]
  \;=\;
  \mathbb{E}\!\Bigl[\tfrac{1}{|S|}\Bigr].
\]
Then the variance of each $\omega_i$ is
\[
  \mathrm{Var}[\omega_i]
  \;=\;
  \mathbb{E}[\omega_i^2] \;-\; (\mathbb{E}[\omega_i])^2
  \;=\;
  \frac{1}{n}\,\mathbb{E}\!\Bigl[\tfrac{1}{|S|}\Bigr]
  \;-\;
  \left(\frac{1}{n}\right)^2.
\]
Summing over $i=1,\dots,n$ yields
\[
  \sum_{i=1}^n \mathrm{Var}[\omega_i]
  \;=\;
  \sum_{i=1}^n
    \Bigl(\mathbb{E}[\omega_i^2] \;-\; (\mathbb{E}[\omega_i])^2\Bigr)
  \;=\;
  \mathbb{E}\!\Bigl[\tfrac{1}{|S|}\Bigr]
  \;-\;
  \frac{1}{n}.
\]
Finally, since $|S| \sim \mathrm{Binomial}\bigl(n,p_{\mathrm{avg}}\bigr)$ under the Markov-by-age policy and conditional independence, we have
\[
  \mathbb{E}\!\Bigl[\tfrac{1}{|S|}\Bigr]
  \;=\;
  \sum_{s=1}^{n}
    \frac{1}{s}
    \,\binom{n}{s}
    \,(p_{\mathrm{avg}})^s
    \,\bigl(1 - p_{\mathrm{avg}}\bigr)^{n-s}.
\]
Thus,
\[
  \Sigma
  \;=\;
  \mathbb{E}\!\Bigl[\tfrac{1}{|S|}\Bigr]
  \;-\;\frac{1}{n}
  \;=\;
  \sum_{s=1}^n
    \frac{1}{s}
    \,\binom{n}{s}
    \,(p_{\mathrm{avg}})^s
    \bigl(1 - p_{\mathrm{avg}}\bigr)^{n-s}
  \;-\;
  \frac{1}{n}.
\]
This completes the proof.
\end{proof}

\printbibliography
\end{document}